\renewcommand{\thefootnote}{\fnsymbol{footnote}}
\newcommand{\ui}[1]{{\left\vert\kern-0.25ex\left\vert\kern-0.25ex\left\vert #1 
    \right\vert\kern-0.25ex\right\vert\kern-0.25ex\right\vert}}
\def\maketitle{\par
      \begingroup
        \def\thefootnote{\fnsymbol{footnote}}%
      \setcounter{footnote}\z@
      \def\@makefnmark{\hbox to\z@{$\m@th^{\@thefnmark}$\hss}}%
      \long\def\@makefntext##1{\noindent
          \ifnum\c@footnote>\z@\relax
            \hbox to1.8em{\hss$\m@th^{\@thefnmark}$}##1%
          \else
          \hbox to1.8em{\hfill}%
            \parbox{\dimexpr\linewidth-1.8em}{\raggedright ##1}%
          \fi}
      \if@twocolumn\twocolumn[\@maketitle]%
      \else\newpage\global\@topnum\z@\@maketitle\fi
      \thispagestyle{titlepage}\@thanks\endgroup
      \setcounter{footnote}\z@
      \gdef\@date{\today}\gdef\@thanks{}%
      \gdef\@author{}\gdef\@title{}\gdef\@dedicatory{}}
\newcommand{\brown}[1]{\textcolor[rgb]{0.60,0.30,0.10}{#1}}    
\numberwithin{equation}{section}
\theoremstyle{definition}
\newtheorem{theorem}{Theorem}[section]
\newtheorem{lemma}[theorem]{Lemma}
\newtheorem{proposition}[theorem]{Proposition}
\newtheorem{definition}[theorem]{Definition}
\newtheorem{corollary}[theorem]{Corollary}
\newtheorem{remark}[theorem]{Remark}
\font\germ=eufm10
\def\a{\mbox{\germ a}}
\def\p{\mbox{\germ p}}
\def\s{\mbox{\germ s}}
\def\u{\mbox{\germ u}}
\def\z{\mathbf z}
\def\ui{\|\hspace{-.25mm} |\,}
\def\R{\mathbb R}
\def\C{\mathbb C}
\def\Cnn{{\mathbb C}_{n\times n}}
\def\F{\mathbb F}
\def\H{\mathbb H}
\def\A{\mathcal A}
\def\a{\textbf{\em a}}
\def\fa{\mathfrak a}
\def\diag{{\mbox{diag}\,}}
\def\rank{\mbox{\rm rank\,}}
\def\ad{{\operatorname{ad}\,}}
\def\Ad{{\operatorname{Ad}\,}}
\def\Gr{{\mbox{\bf Gr}}}
\begin{document}

\title[Quaternionic Grassmannians]{ Color Image Set Recognition Based on Quaternionic Grassmannians}

\author{Xiang Xiang Wang}
\address{Department of Mathematics and Statistics\\ University of Nevada, Reno\\ Reno \\ NV 89557-0084\\ USA}
\email{xiangxiangw@unr.edu}

\author{Tin-Yau Tam}
\address{Department of Mathematics and Statistics\\ University of Nevada, Reno\\ Reno \\ NV 89557-0084\\ USA}
\email{ttam@unr.edu}
\keywords{Quaternionic Grassmaniann, Color Image Set, Shortest Geodesic, Image  Recognition}
%


\begin{abstract}  
We propose a new method for recognizing color image sets using quaternionic Grassmannians, which use the power of quaternions to capture color information and represent each color image set as a point on the quaternionic Grassmannian. We provide a direct formula to calculate the shortest distance between two points on the quaternionic Grassmannian, and use this distance to build a new classification framework. Experiments on the ETH-80 benchmark dataset and the Highway Traffic video dataset show that our method achieves good recognition results. We also discuss some limitations in stability and suggest ways the method can be improved in the future.

\end{abstract}

\maketitle


\section{Introduction}
Recognizing image sets is an important task in computer vision, with applications in areas such as face recognition, object tracking, and video analysis. Instead of processing individual images independently, image set recognition methods treat a set of images collectively, leading to more robust performance under varying conditions such as lighting, pose, and occlusion \cite{kim2007, hamm2008}. This paradigm is particularly useful in real-world applications where each subject or object is captured under multiple views or conditions, as seen in surveillance systems, video-based biometric authentication, and behavior recognition \cite{Har2011, liu2003video}.

A Grassmannian manifold is the collection of all subspaces of a fixed dimension $k$ within an $n$-dimensional real or complex space. It offers a natural and effective mathematical framework for comparing image sets based on the geometry of subspaces. By modeling each image set as a point on the Grassmannian, these methods leverage intrinsic manifold structure to compute meaningful distances and enable classification using methods informed by the underlying geometry.  
 
 In recent years, various methods have been developed using Grassmannians for image set recognition, where image sets are represented as subspaces in higher-dimensional spaces \cite{souza2023, wang2011}. 
 One of the earliest and most fundamental approaches is the Mutual Subspace Method (MSM) proposed by Yamaguchi, Fukui, and Maeda in 1998 \cite{yamaguchi1998face}, which compares subspaces using principal angles. Building on this idea, the Grassmannian Nearest Neighbor (GNN) classifier was widely adopted as a baseline method in later studies, using distances such as projection or geodesic distance between subspaces for classification \cite{hamm2008}.
In 2008, Hamm and Lee introduced Grassmann Discriminant Analysis (GDA), which performs dimensionality reduction on the Grassmann manifold to enhance discrimination between classes \cite{hamm2008}. This was extended in 2011 by Harandi et al. through the Grassmannian Graph Embedding Discriminant Analysis (GEDA), which incorporated graph embedding techniques for improved class separation \cite{Har2011}. Later, in 2013, Harandi and colleagues proposed Grassmannian Discriminant Learning (GDL), which formulated a discriminative framework based on manifold geometry for more effective subspace learning \cite{Har2013}.
To further enhance local structure preservation, Grassmannian Locality Preserving Projection (GLPP) was introduced by Kumar in 2019, and also explored by Wang et al., as a method that maintains neighborhood relationships within the Grassmannian structure \cite{kumar2019jumping}. More recently, Wei, Shen, Sun, Gao, and Ren proposed several new models. In 2022, they developed the Grassmannian Neighborhood Preserving Embedding (GNPE) method, which focuses on preserving the neighborhood structure during embedding \cite{wei2022neighborhood}. In 2024, the same group introduced Grassmannian Adaptive Local Learning (GALL) and its variant F-norm based Grassmannian Adaptive Local Learning (F-GALL), which were derived from adaptive optimization formulations tailored to local learning on the  Grassmannian \cite{wei2024}.
Together, these methods represent the evolution of Grassmannian-based approaches in image set classification, progressively incorporating more sophisticated structures such as local geometry, discriminant embedding, and adaptive learning mechanisms.

 Although these methods have been successful, many current techniques for handling color images still rely on traditional methods that treat the RGB channels separately. While this can work in certain situations, it misses the connections between the color channels, making it harder to capture the full structure of the color images. As a result, valuable inter-channel correlations are lost, which can lead to suboptimal performance, especially in tasks where color is a crucial distinguishing factor.

Quaternions, which extend complex numbers into four dimensions, offer a useful approach by allowing all three RGB channels to be stored together in one quaternionic matrix. This provides a compact way to store the data while keeping the relationships between the color channels, making it a more efficient and meaningful representation for color images \cite{le2003, pei1999}. 
Quaternion-based representations have been successfully applied in color image filtering, edge detection, and recognition, as they preserve chromatic information and allow for algebraically elegant operations in multidimensional color space. 
By embedding color images into quaternionic Grassmannians, we can take advantage of both quaternions and Grassmannians, creating a powerful method for recognizing image sets.

One of the key challenges in this area is figuring out how to measure distances between points in quaternionic Grassmannian space. These distance measurements are important for comparing image sets and performing classification. Finding a way to reliably and efficiently calculate the distance between two points in quaternionic Grassmannians has been a long-standing problem. To address this, we provide a clear mathematical expression for the shortest geodesic distance between two points in quaternionic Grassmannian space, using matrices. This distance formula is central to our new framework for recognizing color image sets. Our method bridges the gap between quaternion algebra and Riemannian geometry, offering a novel tool for structure-preserving image set analysis.


To evaluate the effectiveness of our method, we conduct experiments on benchmark dataset and compare our approach with several established Grassmannian-based methods for image set recognition, including  a range of approaches such as Grassmannian
Nearest Neighbor (GNN), Grassmann Discriminant Analysis
(GDA), Grassmannian Graph Embedding Discriminant Analysis (GEDA), Grassmannian Discriminant Learning (GDL), among others.  These comparisons highlight the advantages of our quaternionic framework in capturing both geometric and chromatic information.

Furthermore, our framework naturally extends to color video recognition, where each video can be treated as a set of frames represented by quaternionic subspaces. This extension demonstrates the flexibility and generalizability of our approach in dynamic scenarios.

The rest of this paper is organized as follows: Section \ref{sec2} reviews background information and related work on the quaternion algebra for color image representation and the quaternionic unitary  group. In Section \ref{sec3}, we present the mathematical details of how to calculate the shortest distance in quaternionic Grassmannians. Section \ref{sec4} describes our proposed framework for recognizing color image sets. Section \ref{sec5} shows experimental results that demonstrate how well our method works. Finally, Section \ref{sec6} discusses future work.

\section{Background}\label{sec2}
In this section, we provide an overview of quaternion algebra for color image representation and quaternionic unitary group, which together form the mathematical foundation of our method for representing and analyzing color image sets using quaternionic Grassmannians. We adhere to the standard notations listed in Table~\ref{tab:notations} to ensure clarity and consistency. These notations will be used throughout the paper.
\begin{table}[h]
   \centering
   \begin{tabular}{c|l}
      \toprule
      \textbf{Symbol}   & \textbf{Name} \\ 
      \midrule
      $I$&The identity matrix with the size $n\times n$\\
            $\R$ & The set of all real numbers\\
      $\C$ & The set of all complex numbers\\
         $\H$ & The set of all quaternion numbers\\
            $\F$ & Represents either $\R$, $\C$ or $\H$\\
    $\F_{n\times m}$ & The set of all   matrices with  the size $n\times m$ in $\F$\\
    $\F_{n}$ or $\F_{n\times 1}$ & The set of all  vectors with  the size $n$ in $\F$\\
        $M$ & A smooth manifold\\
    $T_x M$ &The tangent space of $M$ at point $x \in M$\\
    $U(n)$& Unitary group\\
    $\s_{\H}(n)$ & The  space of $n\times n$ skew  quaternionic Hermitian matrices\\
      $U_{\H}(n)$ &Quaternionic unitary group\\
      $\Gr_{n,k}(\F)$ &   Grassmannian with $k$-dimensional subspaces in $\F_n$\\
      $S_n$ & The space of $n\times n$ real symmetric matrices\\
      $H_n$ &The  space of $n\times n$ Hermitian matrices\\
      $Q_n$ & The space of $n\times n$ quaternionic Hermitian matrices\\
      $\|\cdot\|_F$ & Frobenius norm\\
       $\|\cdot\|_{\H}$ & Frobenius norm for quaternionic matrix\\
            $\u(n)$ & The set of $n\times n$ skew Hermitian matrices\\
     $\exp(\cdot)$ or $e^{\cdot}$ & Exponential map of the matrix\\
     $\chi_H$ &The complex representation
of the quaternionic matrix $H$\\
$\cdot^*$ & The transpose and conjugate notation \\
$\bar{\cdot}$ & The conjugate notation \\
$ \sinh(M) $& The hyperbolic sine of the matrix $M$: $\sinh(M)= \frac{e^M -e^{-M}}{2}$\\ 
$[A,B]$& Lie bracket:  $[A,B]=AB-BA$\\
         \bottomrule
   \end{tabular}
   \caption{Notations used in this paper}   \label{tab:notations}
\end{table}

\subsection{Quaternion Algebra for Color Image Representation}

Quaternions, introduced by Hamilton in 1843 \cite{hamilton1844quaternions}, extend complex numbers to four dimensions. They are widely used in computer graphics \cite{shoemake1985animating,kuipers1999quaternions}, robotics \cite{altmann1986rotations}, and signal processing \cite{zhang1997quaternion}. In the context of image processing, quaternions provide a natural and compact way to encode color images by combining the RGB channels into a single quaternion-valued matrix \cite{le2003, pei1999}.

A quaternion $q \in \H$ can be written as:
\[
q = q_0 + q_1 i + q_2 j + q_3 k,
\]
where $q_0, q_1, q_2, q_3 \in \R$, and \( i, j, \) and \( k \) are the basis elements satisfying the fundamental relations:
\[
i^2 = j^2 = k^2 = ijk = -1.
\]

The conjugate and norm of $q$ are defined respectively by:
\[\bar{q} = q_0 - q_1 i - q_2 j - q_3 k, \quad |q| = \sqrt{q\bar{q}} = \sqrt{q_0^2 + q_1^2 + q_2^2 + q_3^2}.
\]

For any nonzero quaternion $q$, the inverse is given by:
\[q^{-1} = \frac{\bar{q}}{|q|^2}.
\]

A quaternionic matrix $H \in \H_{n\times m}$ has the form:
\[H = H_0 + H_1 i + H_2 j + H_3 k,
\]
where $H_0, H_1, H_2, H_3 \in \R_{n\times m}$. This can also be written as:
\[H = (H_0 + H_1 i) + (H_2 + H_3 i) j,
\]
which is which is a convenient form for deriving its complex representation.
The complex representation of the quaternionic matrix \( H \), denoted as \( \chi_H \), is defined by:\begin{equation}\label{complexRe}
\chi_H = \begin{bmatrix}
H_0 + H_1 i & H_2 + H_3 i \\
-\overline{H_2 + H_3 i} & \overline{H_0 + H_1 i}
\end{bmatrix} =
\begin{bmatrix}
H_0 + H_1 i & H_2 + H_3 i \\
- H_2 + H_3 i & H_0 - H_1 i
\end{bmatrix}.
\end{equation}

This complex representation \eqref{complexRe} preserves   several important properties of quaternionic matrices, as outlined in the following proposition.

\begin{proposition} \label{Complex} [Lee, 1948 \cite{lee1948}]
Let $A,B \in \H_{n\times n}$. Then:
\begin{itemize}
  \item $\chi_{AB} = \chi_A \chi_B$;
  \item $\chi_{A+B} = \chi_A + \chi_B$;
  \item $\chi_{A^*} = (\chi_A)^*$;
  \item If $A^{-1}$ exists, then $\chi_{A^{-1}} = (\chi_A)^{-1}$;
  \item $\chi_A$ is unitary, Hermitian, or normal if and only if $A$ is unitary, Hermitian, or normal, respectively.
\end{itemize}
\end{proposition}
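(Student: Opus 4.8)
The plan is to handle the five assertions in order of logical dependence: establish additivity and multiplicativity of the map $H\mapsto\chi_H$ first, then deduce the formulas for $\chi_{A^*}$ and $\chi_{A^{-1}}$, and finally read off the unitary/Hermitian/normal characterizations.

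First I would fix the notation $P_H:=H_0+H_1 i$ and $Q_H:=H_2+H_3 i$, so that $H=P_H+Q_H j$ and
\[
\chi_H=\begin{bmatrix} P_H & Q_H\\ -\overline{Q_H} & \overline{P_H}\end{bmatrix}.
\]
Additivity $\chi_{A+B}=\chi_A+\chi_B$ is immediate, since $\chi$ depends $\R$-linearly and blockwise on the real matrices $H_0,H_1,H_2,H_3$. For multiplicativity the only input needed is the entrywise identity $jZ=\overline{Z}\,j$ valid for every complex matrix $Z$, which is a direct consequence of $ji=-ij$. Using it I would expand
\[
AB=(P_A+Q_A j)(P_B+Q_B j)=(P_AP_B-Q_A\overline{Q_B})+(P_AQ_B+Q_A\overline{P_B})\,j ,
\]
which identifies $P_{AB}$ and $Q_{AB}$; comparing blockwise with the ordinary $2n\times 2n$ complex product $\chi_A\chi_B$ then gives $\chi_{AB}=\chi_A\chi_B$, the $(2,1)$ and $(2,2)$ blocks agreeing after one applies complex conjugation. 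Equivalently, one may note that $\chi_H$ is precisely the matrix of left multiplication by $H$ on $\H^n\cong\C^{2n}$ regarded as a right complex vector space, which makes $\chi_{A+B}=\chi_A+\chi_B$ and $\chi_{AB}=\chi_A\chi_B$ structural rather than computational.

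Next, for $\chi_{A^*}=(\chi_A)^*$ I would write $A^*=A_0^T-A_1^T i-A_2^T j-A_3^T k$ and regroup it into the form $P_{A^*}+Q_{A^*}j$ (again moving $i$ past $j$ via $ji=-ij$), obtaining $P_{A^*}=P_A^*$ and $Q_{A^*}=-Q_A^T$; substituting these into the block formula reproduces the conjugate transpose of $\chi_A$. Since $\chi$ is visibly injective---$\chi_H$ determines $P_H$ and $Q_H$, hence all four real blocks---the relation $\chi_{A^{-1}}=(\chi_A)^{-1}$ follows at once from $AA^{-1}=I$, $\chi_I=I_{2n}$ and multiplicativity. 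For the final bullet I would just combine the previous items: $A$ is Hermitian $\iff A^*=A\iff\chi_{A^*}=\chi_A\iff(\chi_A)^*=\chi_A$; $A$ is unitary $\iff A^*A=I\iff(\chi_A)^*\chi_A=\chi_{A^*A}=\chi_I=I_{2n}$; and $A$ is normal $\iff A^*A=AA^*\iff(\chi_A)^*\chi_A=\chi_A(\chi_A)^*$, each equivalence using multiplicativity, the $\chi_{A^*}$ formula, and injectivity of $\chi$.

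The only genuine work lies in the block-matrix bookkeeping for multiplicativity and for $\chi_{A^*}$, where the anti-commutation $ji=-ij$ repeatedly introduces complex conjugates and one must keep careful track of whether $j$ sits to the left or to the right of a given complex block; beyond this care with conventions I expect no conceptual obstacle.
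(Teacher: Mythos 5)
Your proof is correct. Note that the paper does not prove this proposition at all---it is quoted from Lee (1948) as a known fact---so there is no in-paper argument to compare against; your write-up supplies the standard verification. The decomposition $H=P_H+Q_Hj$ with $jZ=\overline{Z}\,j$, the block comparison for $\chi_{AB}=\chi_A\chi_B$ (including the conjugated $(2,1)$ and $(2,2)$ blocks), the identification $P_{A^*}=P_A^*$, $Q_{A^*}=-Q_A^T$, and the deduction of the inverse and of the unitary/Hermitian/normal equivalences from multiplicativity, $\chi_I=I_{2n}$, the $*$-compatibility and injectivity of $\chi$ are all accurate. Two small remarks: in the $\chi_{A^*}$ step no anticommutation is actually needed, since $-A_2^Tj-A_3^Tk=(-A_2^T-A_3^Ti)j$ follows directly from $k=ij$ with the complex coefficient already on the left of $j$; and your ``structural'' alternative (that $\chi_H$ is the matrix of left multiplication by $H$ on $\H^n$ viewed as a right $\C$-space) is sound in spirit but depends on the choice of identification $\H^n\cong\C^{2n}$---with the naive choice one obtains $\chi_H$ only up to conjugation by $\operatorname{diag}(I,-I)$, so the block bookkeeping you carry out explicitly is the safer route, and it is exactly the convention-checking you flag at the end.
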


Moving forward, we introduce the concept of ``standard eigenvalues'' for quaternionic matrices, which are crucial for understanding their spectral properties and have significant applications.
\begin{definition}[Brenner, 1951; Lee, 1948 \cite{Brenner1951, lee1948}]
For any \( n \times n \) quaternionic matrix \( A \), there exist exactly \( n \) (right) eigenvalues that are complex numbers with non-negative imaginary parts. These eigenvalues are referred to as the \emph{standard eigenvalues} of \( A \).
\end{definition}

The following lemma offers a good understanding of these eigenvalues and highlights key properties of the eigenvalue structure for related complex matrices.
\begin{lemma}[Lee, 1949 \cite{lee1948}]
Let \( A \) and \( B \) be \( n \times n \) complex matrices. Then, for the block matrix:
\[
\begin{pmatrix}
A & B \\
- \bar{B} & \bar{A}
\end{pmatrix},
\]
every real eigenvalue (if any) appears an even number of times, while the complex eigenvalues occur in conjugate pairs.
\end{lemma}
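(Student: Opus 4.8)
The plan is to exploit the symmetry of the block matrix under entrywise conjugation. Write
\[
M=\begin{pmatrix} A & B \\ -\overline{B} & \overline{A}\end{pmatrix}\in\C_{2n\times 2n},
\qquad
J=\begin{pmatrix} 0 & I \\ -I & 0\end{pmatrix},
\]
where $J$ is real, $J^{2}=-I$ and $J^{-1}=-J$. A direct $2\times 2$ block multiplication yields the key identity $\overline{M}=JMJ^{-1}$; in words, $M$ is similar to its entrywise conjugate $\overline{M}$. Everything else is extracted from this single identity.

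First I would settle the non-real eigenvalues. Similar matrices share a characteristic polynomial, so $p_{M}=p_{\overline{M}}$, while $p_{\overline{M}}(\lambda)=\det(\lambda I-\overline{M})=\overline{\det(\overline{\lambda}\,I-M)}=\overline{p_{M}(\overline{\lambda})}$. Hence $p_{M}(\lambda)=\overline{p_{M}(\overline{\lambda})}$, which forces every coefficient of $p_{M}$ to be real. The non-real eigenvalues of $M$ are therefore the non-real roots of a real polynomial, so they come in complex-conjugate pairs with equal multiplicities.

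For the real eigenvalues I would promote the matrix identity to a statement about the conjugate-linear map $\phi\colon\C^{2n}\to\C^{2n}$, $\phi(v)=J\overline{v}$. Using that $J$ is real, one checks that $\phi^{2}=-I$, and $\overline{M}=JMJ^{-1}$ rewrites as $M\phi=\phi M$. Combining $M\phi=\phi M$ with the conjugate-linearity of $\phi$ gives $(M-\overline{\lambda}I)\,\phi(v)=\phi\,((M-\lambda I)v)$ for all $v$, so for every $k\ge 1$ the map $\phi$ carries $\ker(M-\lambda I)^{k}$ bijectively onto $\ker(M-\overline{\lambda}I)^{k}$. In particular, when $\lambda\in\R$ the generalized eigenspace $V_{\lambda}=\ker(M-\lambda I)^{2n}$ is $\phi$-invariant, and the restriction $\phi|_{V_{\lambda}}$ is again conjugate-linear with square $-I$. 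Fixing a basis of $V_{\lambda}$, write $\phi|_{V_{\lambda}}$ as $v\mapsto S\overline{v}$ for a matrix $S$; then $S\overline{S}=-I$, and taking determinants gives $|\det S|^{2}=\det(S)\,\overline{\det S}=\det(-I)=(-1)^{\dim_{\C}V_{\lambda}}$. Since the left-hand side is positive, $\dim_{\C}V_{\lambda}$ — that is, the algebraic multiplicity of the real eigenvalue $\lambda$ — must be even.

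The computations involved are all routine: checking $\overline{M}=JMJ^{-1}$ is a $2\times 2$ block multiplication, and the determinant step for even multiplicity is a one-liner. So I do not anticipate a genuine obstacle; the only points needing care are keeping signs and conjugations consistent in the $2n\times 2n$ block algebra, and working with generalized rather than ordinary eigenspaces so that algebraic (not merely geometric) multiplicities are what gets controlled. (This lemma is precisely what makes the notion of standard eigenvalue well posed: the $2n$ eigenvalues of $\chi_{H}$ split into $n$ complex-conjugate pairs, and one keeps the $n$ representatives with nonnegative imaginary part.)
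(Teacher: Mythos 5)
Your proof is correct, and it is worth noting that the paper itself does not prove this lemma at all: it is quoted from Lee's 1948 paper, with the reader pointed to the proof of Theorem 5.4 in Zhang's survey for details. The classical argument there exploits the same symmetry you found, but in eigenvector form: if \((u^T, v^T)^T\) is an eigenvector of the block matrix \(M\) for \(\lambda\), then \((-\bar v^T, \bar u^T)^T\) is an eigenvector for \(\bar\lambda\), which immediately gives the conjugate pairing and, with some extra bookkeeping, the even multiplicity of real eigenvalues. You package that symmetry more structurally, as the antilinear map \(\phi(v)=J\bar v\) with \(\phi^2=-I\) commuting with \(M\) (a quaternionic structure on \(\C_{2n}\)), and then you handle the delicate point — that the \emph{algebraic} multiplicity of a real eigenvalue is even, not merely the geometric one — by restricting \(\phi\) to the generalized eigenspace \(V_\lambda=\ker(M-\lambda I)^{2n}\) and using \(S\bar S=-I \Rightarrow |\det S|^2=(-1)^{\dim V_\lambda}\). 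All the computations check out (\(\overline{M}=JMJ^{-1}\), the intertwining \((M-\bar\lambda I)\phi=\phi(M-\lambda I)\), and the determinant step), so your argument is a complete, self-contained proof; its main advantage over the cited eigenvector-level argument is that the generalized-eigenspace formulation controls multiplicities cleanly, and as a small bonus the same kernel bijection \(\ker(M-\lambda I)^k\to\ker(M-\bar\lambda I)^k\) would also give the equality of multiplicities for conjugate pairs without invoking the real-coefficient characteristic polynomial.
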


This observation leads to the following corollary regarding the eigenvalues of quaternionic
matrices.
\begin{corollary}
Let \( H \in \mathbb{H}_{n \times n} \) be a quaternionic matrix, and let \( \chi_H \) denote its complex representation. Then \( \chi_H \) has exactly \( 2n \) complex eigenvalues, which are symmetrically distributed with respect to the real axis in the complex plane.
Among these, exactly \( n \) eigenvalues lie in the closed upper half-plane (i.e., the set of complex numbers with non-negative imaginary part). These \( n \) eigenvalues are referred to as the \emph{standard eigenvalues} of \( H \), and they correspond to the eigenvalues of \( \chi_H \) located in the upper half-plane.
\end{corollary}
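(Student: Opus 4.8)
The plan is to deduce the corollary directly from the preceding lemma together with the block structure of $\chi_H$. First I would observe that by definition $\chi_H$ is a $2n \times 2n$ complex matrix of exactly the form
\[
\chi_H = \begin{pmatrix} A & B \\ -\bar B & \bar A \end{pmatrix}, \qquad A = H_0 + H_1 i, \quad B = H_2 + H_3 i,
\]
so the lemma (Lee, 1949) applies verbatim: the real eigenvalues of $\chi_H$ (if any) occur with even multiplicity, and the non-real eigenvalues occur in conjugate pairs $\lambda, \bar\lambda$ with equal multiplicities. Since $\chi_H$ has $2n$ eigenvalues counted with multiplicity, this symmetry with respect to the real axis is exactly the first assertion.

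Next I would count how many eigenvalues lie in the closed upper half-plane $\overline{\mathbb{H}^+} = \{z \in \C : \im z \ge 0\}$. Partition the spectrum (with multiplicity) into the non-real eigenvalues, which split into $m$ conjugate pairs contributing $m$ eigenvalues strictly above the axis and $m$ strictly below, and the real eigenvalues, of which there are $2n - 2m$, occurring in pairs by the lemma, say $n - m$ distinct ``slots'' each of even multiplicity — more simply, $2n - 2m$ real eigenvalues counted with multiplicity. The eigenvalues in the open upper half-plane number exactly $m$; adding half of the real ones, namely $n - m$, gives $m + (n - m) = n$ eigenvalues in the closed upper half-plane, once we adopt the convention of assigning exactly half of each real eigenvalue's (even) multiplicity to the ``upper'' count. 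I would state this counting convention explicitly, since it is the only subtle point: the real axis is shared by both half-planes, and the evenness of real multiplicities furnished by the lemma is precisely what makes the even split well-defined. Finally, invoking the definition of standard eigenvalues (Brenner 1951; Lee 1948), these $n$ eigenvalues in the closed upper half-plane are by definition the standard eigenvalues of $H$, and since the right eigenvalues of $H$ correspond to eigenvalues of $\chi_H$ (via the complex representation, using Proposition~\ref{Complex} to transfer the eigen-equation $Hx = x\lambda$ to $\chi_H$), they coincide with the stated $n$ eigenvalues of $\chi_H$.

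The main obstacle, such as it is, is bookkeeping rather than mathematics: making the multiplicity count airtight when real eigenvalues are present, so that ``exactly $n$ in the closed upper half-plane'' is unambiguous. The even-multiplicity statement from the lemma is what resolves it, so I would foreground that dependence. Everything else is a direct translation of the lemma through the identification of $\chi_H$ with a block matrix of the required type, and a one-line appeal to the definition of standard eigenvalues and to Proposition~\ref{Complex} for the correspondence between right eigenvalues of $H$ and eigenvalues of $\chi_H$.
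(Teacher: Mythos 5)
Your proposal is correct and follows essentially the same route as the paper, which presents the corollary as a direct consequence of the Lee lemma applied to the block form of $\chi_H$ and defers the details to the proof of Theorem 5.4 in \cite{zhang97}. You in fact make explicit the one point the paper leaves implicit, namely that the even multiplicity of real eigenvalues is what makes the count of ``exactly $n$ in the closed upper half-plane'' well-defined.
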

Further details can be found in the proof of Theorem 5.4 in~\cite{zhang97}. These spectral insights not only guide theoretical understanding but also motivate how we interpret eigenvalues in quaternionic settings.

To formalize relationships between quaternions, we introduce the notion of similarity, which plays an important role in the classification of eigenvalues.
\begin{definition}
Two quaternions \( q \) and \( p \) are said to be similar if there exists a nonzero quaternion \( s \) such that:
\[
p = s^{-1} q s.
\]
\end{definition}
This definition leads to an important property: similar quaternions preserve their norms, since similarity is a type of isometric transformation.

\begin{remark}
In the context of quaternionic matrices, this concept of similarity helps explain the structure of right eigenvalues. Specifically, any right eigenvalue of a quaternionic matrix is similar to one of its standard eigenvalues. This is why standard eigenvalues are especially useful in analysis: once the \( n \) standard eigenvalues are obtained, they represent all possible right eigenvalues through similarity transformations. 
\end{remark}


The spectral properties of quaternionic matrices, especially their standard eigenvalues, are useful in many real-world applications. One important area where quaternion algebra shows its value is in color image processing.

In color image representation, each pixel of a color image can be represented by a pure quaternion
$q=0+q_1i+q_2j+q_3k$, where the components \( q_1 \), \( q_2 \), and \( q_3 \) correspond to the values of the red, green, and blue channels, respectively. Consequently, a color image of size \( n \times m \) can be represented as a pure \( n \times m \) quaternionic matrix (see Figure~\ref{ColorQ}), with each entry in the matrix being a pure quaternion that encodes the RGB information of the corresponding pixel.

\begin{figure}[htbp]
\begin{center}
 \includegraphics[width=3.5in]{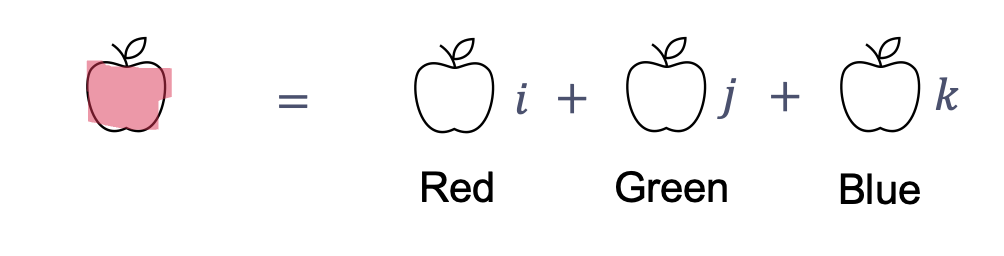} 
\caption{A color image represented as a pure quaternionic matrix}
\label{ColorQ}
\end{center}
\end{figure}

This quaternion-based representation enables compact and unified processing of the color channels while preserving the relationships between them. The key idea is that the quaternionic structure inherently encodes the correlations between the RGB channels, as opposed to treating them as three separate grayscale images. This inter-channel dependency is especially beneficial when applying linear algebraic operations or learning-based models, as it preserves perceptually meaningful structure and improves robustness against variations in lighting and noise \cite{pei1999, le2003}. Moreover, quaternionic coefficients preserve color phase information and encode global relationships between channels, which can lead to improved invariance under transformations such as rotation, illumination changes, and channel permutation \cite{4032812, zhang1997quaternion}.

Furthermore, in the context of image set recognition, these quaternionic representations allow us to model sets of color images as subspaces in a quaternionic vector space. This facilitates the construction of quaternionic Grassmannians, where each image set corresponds to a point on the manifold. Such a geometric framework supports the definition of meaningful distances between sets while preserving the color structure and inter-channel relationships.

However, most existing approaches using quaternions focus on pixel-wise operations or per-image tasks, rather than extending the representation to sets of images. This motivates the use of quaternionic Grassmannians, where image sets are treated as subspaces in a quaternionic vector space, offering a unified representation that is both perceptually and geometrically informed.



\subsection{Quaternionic Unitary Group}

The quaternionic unitary group \( U_{\H}(n) \) plays a central role in defining the geometry of quaternionic Grassmannians. It is also commonly known as the \textit{compact symplectic group}, denoted by \( \mathrm{Sp}(n) \) \cite{djokovic1979products}, which should not be confused with the real symplectic group \( \mathrm{Sp}(2n, \mathbb{R}) \). The group \( U_{\H}(n) \) consists of all \( n \times n \) quaternionic matrices \( Q \) satisfying the unitary condition:
\[
Q^* Q = I.
\]

To understand the geometry of this group, we next describe its tangent space, geodesics, and real dimension.
\subsubsection{Tangent Space}
The tangent space of \( U_{\H}(n) \) at the identity matrix \( I \) is the space of skew-Hermitian quaternionic matrices:
\[
\s_{\H}(n) = \{ H \in \H_{n\times n} : H^* = -H \}.
\]
This can be derived using a differentiable curve \( \gamma(t) \in U_{\H}(n) \) such that \( \gamma(0) = I \). The unitary condition \( \gamma(t)^* \gamma(t) = I \) implies via differentiation at \( t = 0 \) that \( S = \dot{\gamma}(0) \) satisfies \( S^* = -S \) \cite{XU2015}.

Conversely, for any \( S \in \s_{\H}(n) \), the curve \( \gamma(t) = \exp(tS) \) remains entirely within \( U_{\H}(n) \), as the exponential of a skew-Hermitian matrix is unitary. Thus,
\[
T_I U_{\H}(n) = \s_{\H}(n).
\]

At an arbitrary point \( Q \in U_{\H}(n) \), the tangent space is given by left translation:
\[
T_Q U_{\H}(n) = \{ QX : X \in \s_{\H}(n) \}.
\]

\subsubsection{Geodesics}
Since \( U_{\H}(n) \) is a Lie group with a bi-invariant Riemannian metric, the geodesics are one-parameter subgroups:
\[
\gamma(t) = Q \exp(tX), \quad X \in \s_{\H}(n).
\]

\subsubsection{Dimension}
The real dimension of \( U_{\H}(n) \) is equal to the real dimension of its Lie algebra \( \s_{\H}(n) \). A skew-Hermitian quaternionic matrix has \( n \) purely imaginary quaternion entries on the diagonal (each contributing 3 real parameters), and \( \frac{n(n-1)}{2} \) arbitrary quaternion entries above the diagonal (each contributing 4 real parameters), because the skew-Hermitian condition determines the entries below the diagonal. Hence,
\begin{equation}\label{dimension}
\dim U_{\H}(n) = 3n + 4 \cdot \frac{n(n-1)}{2} = n(2n + 1).
\end{equation}
This result is consistent with known properties of the compact symplectic group \( \mathrm{Sp}(n) \), which is diffeomorphic to \( U_{\H}(n) \) \cite{djokovic1979products}.

This geometric framework for \( U_{\H}(n) \) serves as the foundation for defining and analyzing quaternionic Grassmannians, which we discuss in the next section.


\section{ Quaternionic Grassmannian} \label{sec3}
Quaternionic Grassmannian $\Gr_{n,k }(\H)$ is the set of all $k$-dimensional subspaces in $\H_n$ and 
we can define the quaternionic Grassmannian as a set of quaternionic matrices by orthogonal projection matrices in $\H_{n\times n}$:
\begin{equation} \Gr_{n,k }(\H)=\{ P\in \H_{n\times n}: P^2=P,P^*=P,\ \rank P=k\},\end{equation}
which is equivalent to the quotient space
\[U_{\H}(n)/(U_{\H}(k)\times U_{\H}(n-k)),
\]

where $U_{\H}(n)=\{U\in \H_{n\times n}: U^*U=UU^*=I_n\}$ is the quaternionic unitary group.

\subsection{Tangent space and geodesic of $\Gr_{n,k }(\H)$}
We now derive the tangent space and the geodesic form of $\Gr_{n,k }(\H)$ based on $U_{\H}(n)$.
Consider a curve  starting from a point $P \in \Gr_{n,k }(\H)$,   parameterized as $\alpha(t)=\beta(t)P\beta(t)^*$ with $\alpha(0)=P$, 
where
$\beta(t)\in U_{n}(\H)$ and $\beta(0)=I$.

\subsubsection{Tangent Space} Since  the tangent space of  $U_{\H}(n)$ at $I$ is $\s_{\H}(n)$, we have
$\dot{\beta}(0)=X\in \s_{\H}(n)$ and 
the tangent vector is $\dot{\alpha}(0)=[X,P]$. Therefore, the tangent space at 
$P\in \Gr_{n,k }(\H)$ is
\begin{equation}
T_P\Gr_{n,k }(\H)=\{[X,P]: X\in \s_{\H}(n)\}.\end{equation}
 To emphasize the roles of $P$ and $Q$ in the tangent vector, we write
respectively the tangent vector of the geodesic from $P$ to $Q$ and the
tangent vector from $Q$ to $P$ as
\[\overrightarrow{PQ} =[X_{PQ},P], \quad \overrightarrow{QP}=[X_{QP},Q].\]

Since each $P\in \Gr_{n,k }(\H)$ is a Hermitian quaternionic matrix with eigenvalues  consisting of  $k$ ones and 
$n-k$  zeros, the spectral decomposition of $P=U_P P_0U_P^*$ yields the following map:
 \[ \pi: U_{\H}(n) \to \Gr_{n,k} (\H), U\to UP_0 U^*,
 \]
 which is surjective.

A similar approach to the complex Grassmannian case in \cite{BT2020} allows us to express the tangent space of the quaternionic Grassmannian as follows 
\begin{equation}\label{XPUpstar} T_P \Gr_{n,k }(\H)= \{[X,P]:X\in \Ad(U_P)\p_*\},
\end{equation}
where 
\begin{equation}\label{pstar}\p_*: \left\{\begin{pmatrix} 0&Y_1\\-Y^*_1&0
\end{pmatrix} :Y_1\in \H_{k\times (n-k)}\right\}\subset  \s_{\H}(n),\end{equation}

and $\Ad(U)X=UXU^{-1}.$

\subsubsection{Geodesic} Furthermore, the geodesic starting from $P$  in the direction $[X,P]$, where  $X=U_P\hat{X}U_p$ and $\hat{X}\in \p_*$  in $\Gr_{n,k}(\H)$ can be derived from the corresponding curve in $U_{\H}(n)$ with the form $U_P \hat{X}$.
Thus, the geodesic of the Grassmannian is

\begin{equation} \label{gammaXP}
\gamma(t)=\exp(tX)P\exp(-tX). \end{equation}
 

\subsubsection{Dimension}
To verify that the set of tangent vectors \( \{[X, P] : X \in \s_{\H}(n)\} \) spans the full tangent space \( T_P \Gr_{n,k}(\H) \), we perform a dimension count based on differential geometric principles and Lie group theory. We begin by recalling a fundamental result from differential geometry:
\begin{proposition}[Proposition 3.12 in \cite{lee2012smooth}]
Let \( M \) be an \( n \)-dimensional smooth manifold with boundary. Then for each point \( p \in M \), the tangent space \( T_p M \) is an \( n \)-dimensional real vector space.
\end{proposition}
This implies that to demonstrate \( \{[X, P] : X \in \s_{\H}(n)\} \) is indeed the full tangent space, it suffices to show that it has the same real dimension as the Grassmannian \( \Gr_{n,k}(\H) \) itself.

To this end, we compute the dimension of \( \Gr_{n,k}(\H) \) using its homogeneous space structure. The quaternionic Grassmannian can be written as symmetric space:
\[
\Gr_{n,k}(\H) \cong \brown{U_{\H}(n) / (U_{\H}(k) \times U_{\H}(n-k))},
\]
where \( U_{\H}(n) \) denotes the quaternionic unitary group. According to \cite{djokovic1979products} and \eqref{dimension}, the real dimension of \( U_{\H}(n) \) is \( n(2n + 1) \). Using the standard formula for the dimension of a homogeneous space, we have:
\[\begin{split}
\dim \Gr_{n,k}(\H) &= \dim U_{\H}(n) - \dim U_{\H}(k) - \dim U_{\H}(n-k)\\
& = n(2n + 1) - k(2k + 1) - (n - k)(2(n - k) + 1).
\end{split}
\]
Simplifying the above expression yields:
\[
\dim \Gr_{n,k}(\H) = 4k(n - k).
\]
Next, from our formulation in Equations~\eqref{XPUpstar} and~\eqref{pstar}, we define a candidate tangent space at \( P \in \Gr_{n,k}(\H) \) as:
\[
\{[X, P] : X \in \s_{\H}(n)\}  \cong \{[X,P]:X\in \Ad(U_P)\p_*\},
\]
and 
\[ \dim \{[X, P] : X \in \s_{\H}(n)\}  =\dim \{Y_1: Y_1\in \H_{k\times(n-k)}\}=4k(n-k).\]
Therefore, \[
\dim T_P \Gr_{n,k}(\H) = \dim \{[X, P] : X \in \s_{\H}(n)\} = 4k(n - k),
\]
and we conclude that \( \{[X, P] : X \in \s_{\H}(n)\} \) indeed spans the tangent space \( T_P \Gr_{n,k}(\H) \).


\subsection{The Shortest Distance Between Two Points in \( \Gr_{n,k }(\H) \)}

To study the geometry of quaternionic Grassmannians \( \Gr_{n,k}(\H) \), we begin by characterizing geodesics between two points and defining their shortest distances.

\begin{remark}
It is important to note that geodesics on the Grassmannian manifold are not necessarily unique. There may exist multiple geodesics connecting two given points \( P \) and \( Q \), each corresponding to a different length. Therefore, when we refer to the ``shortest distance'' on the Grassmannian, we specifically mean the minimal geodesic distance, that is, the length of the shortest path among all possible geodesics connecting the two points. 
\end{remark}

Let \( P \in \Gr_{n,k}(\H) \) and consider a tangent vector \( [X, P] \in T_P \Gr_{n,k}(\H) \). The following properties hold for such tangent vectors:
\begin{enumerate}
    \item \( X = PX + XP \);
    \item \( [X, P] = (I - 2P)X = -X(I - 2P) \);
    \item \( \exp(X)P - P\exp(-X) = \sinh M \).
\end{enumerate}
These results are derived from Lemma 2.1 and Lemma 3.2 in \cite{BK2015}. They are essential in understanding how geodesics evolve on the Grassmannian.

Furthermore,  following Theorem 3.3 in \cite{BK2015}, for any two points \( P, Q \in \Gr_{n,k }(\H) \), there exists a geodesic curve connecting them, given by
\begin{equation}\label{gammaXP}
\gamma(t) = \exp(tX)P\exp(-tX), \quad t \in [0,1],
\end{equation}
where the skew-Hermitian quaternionic matrix \( X \in \s_{\H}(n) \) satisfies the boundary condition
\[
\exp(2X) = (I - 2Q)(I - 2P).
\]
This form ensures that \( \gamma(0) = P \) and \( \gamma(1) = Q \), making \( \gamma(t) \) a valid geodesic on the Grassmannian manifold.

To further analyze distances on \( \Gr_{n,k}(\H) \), we adopt the quaternionic Schatten-2 norm, as introduced in \cite{MiaoKou2020}. This norm is defined based on the singular value decomposition (SVD) of quaternionic matrices, which was rigorously established in \cite{zhang97}.
\begin{theorem}[Singular-Value Decomposition \cite{zhang97}]
Let \( A \in \H_{n \times m} \) be a quaternionic matrix of rank \( r \). Then there exist quaternionic unitary matrices \( U \in \H_{n \times n} \) and \( V \in \H_{m \times m} \) such that
\[
UAV = \begin{pmatrix} D_r & 0 \\ 0 & 0 \end{pmatrix},
\]
where \( D_r = \diag(d_1, \dots, d_r) \) and the \( d_i \)'s are the positive singular values of \( A \).
\end{theorem}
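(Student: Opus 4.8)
The plan is to reduce the statement to the spectral theorem for quaternionic Hermitian matrices, apply that to $A^{*}A$, and then run the classical SVD construction, paying attention to noncommutativity throughout.

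First I record the input fact: every Hermitian $B\in\H_{n\times n}$ can be written $B=V\Lambda V^{*}$ with $V\in U_{\H}(n)$ and $\Lambda$ a real diagonal matrix. This is standard and can be taken from \cite{zhang97}; if one wants a self-contained argument there are two natural routes. One is a Rayleigh-quotient argument directly over $\H$: a maximizer $v$ of $v^{*}Bv$ on the unit sphere of $\H_{n}$ is a right eigenvector of $B$ whose eigenvalue $v^{*}Bv$ is automatically real (since $v^{*}Bv$ is real for Hermitian $B$), after which one peels off $v$ and recurses on its orthogonal complement. The other is the complex representation: $\chi_{B}$ is complex Hermitian by Proposition~\ref{Complex}, each of its eigenvalues occurs with even multiplicity by the Lee lemma quoted above, and the antilinear involution $v\mapsto J\overline{v}$ with $J=\begin{pmatrix}0&I\\-I&0\end{pmatrix}$ restricts to a quaternionic structure on each eigenspace, so one may choose an orthonormal eigenbasis of $\chi_{B}$ whose two halves assemble into a matrix of the block shape $\chi_{V}$, which descends to the desired quaternionic unitary $V$.

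Now apply this to $B=A^{*}A$, which is Hermitian and, since $v^{*}A^{*}Av=\|Av\|_{\H}^{2}\ge 0$ for all $v\in\H_{m}$, positive semidefinite; moreover $A^{*}Av=0$ forces $\|Av\|_{\H}=0$ and hence $Av=0$ by positive-definiteness of the norm, so $\ker B=\ker A$ and $\rank B=r$. Writing $B=V\Lambda V^{*}$ with $V\in U_{\H}(m)$, order the eigenvalues so that $\Lambda=\diag\{d_{1}^{2},\dots,d_{r}^{2},0,\dots,0\}$ with $d_{1},\dots,d_{r}>0$, and split $V=[V_{1}\ V_{2}]$ with $V_{1}\in\H_{m\times r}$. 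Then $A^{*}AV_{1}=V_{1}D_{r}^{2}$ with $D_{r}=\diag\{d_{1},\dots,d_{r}\}$, while $A^{*}AV_{2}=0$ and hence $AV_{2}=0$. Next set $U_{1}:=AV_{1}D_{r}^{-1}\in\H_{n\times r}$; since $D_{r}$ is real diagonal it commutes with matrices of compatible size, and $U_{1}^{*}U_{1}=D_{r}^{-1}(V_{1}^{*}A^{*}AV_{1})D_{r}^{-1}=D_{r}^{-1}D_{r}^{2}D_{r}^{-1}=I_{r}$, so $U_{1}$ has orthonormal columns; complete it to $U=[U_{1}\ U_{2}]\in U_{\H}(n)$ by extending to an orthonormal basis of $\H_{n}$ via quaternionic Gram--Schmidt (scalar coefficients multiplied on the right). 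Computing the four blocks of $U^{*}AV$: the $(1,1)$ block is $U_{1}^{*}AV_{1}=D_{r}^{-1}V_{1}^{*}A^{*}AV_{1}=D_{r}$; the $(1,2)$ and $(2,2)$ blocks vanish because $AV_{2}=0$; and since $AV_{1}=(AV_{1}D_{r}^{-1})D_{r}=U_{1}D_{r}$, the $(2,1)$ block equals $U_{2}^{*}U_{1}D_{r}=0$. Hence $U^{*}AV=\begin{pmatrix}D_{r}&0\\0&0\end{pmatrix}$; relabelling the unitary $U^{*}$ as the ``$U$'' of the statement gives the claimed form, and the $d_{i}>0$ are the square roots of the nonzero eigenvalues of $A^{*}A$, i.e.\ the singular values of $A$.

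The main obstacle is really only the input spectral theorem over $\H$, together with the care forced by noncommutativity: one must consistently use right eigenvectors, keep the real diagonal factors $D_{r}^{\pm 1}$ on a side where they commute through, and verify that Gram--Schmidt over $\H$ genuinely extends an orthonormal family to a quaternionic unitary. The remaining identities ($A^{*}AV_{2}=0\Rightarrow AV_{2}=0$, $U_{1}^{*}U_{1}=I_{r}$, $U_{2}^{*}U_{1}=0$) and the block computation are formally identical to the real and complex cases. A fully complex-representation-based alternative also works — take the ordinary complex SVD of $\chi_{A}$, observe that each quaternionic singular value appears there with multiplicity two, and force the complex unitaries into block shapes $\chi_{U},\chi_{V}$ using $v\mapsto J\overline{v}$ on the eigenspaces of $\chi_{A}^{*}\chi_{A}$ — but it requires more delicate handling of the symplectic structure, so I would prefer the direct argument above.
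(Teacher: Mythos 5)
The paper does not actually prove this theorem: it is quoted directly from Zhang \cite{zhang97}, so there is no internal proof to compare yours against. Your argument is correct and is essentially the standard derivation of the quaternionic SVD: the spectral theorem for quaternionic Hermitian matrices applied to $A^{*}A$, the observation that $\ker(A^{*}A)=\ker A$ so that $\rank(A^{*}A)=r$, the definition $U_{1}=AV_{1}D_{r}^{-1}$, completion to a quaternionic unitary by Gram--Schmidt with right scalar coefficients, and the four-block computation of $U^{*}AV$; every step survives noncommutativity because you work with right eigenvectors and only ever move the real diagonal factors $D_{r}^{\pm 1}$ past other real diagonal or Hermitian factors. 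One prose slip worth fixing: a real diagonal matrix does \emph{not} commute with arbitrary matrices of compatible size (only real scalar multiples of the identity do), but your displayed computations never use that general claim --- they only need that $D_{r}^{-1}$ is Hermitian, that $AV_{1}=U_{1}D_{r}$ by definition of $U_{1}$, and that $D_{r}^{-1}D_{r}^{2}D_{r}^{-1}=I_{r}$, all of which are fine --- so the proof stands after rewording that sentence. Your alternative sketch via the complex representation $\chi_{A}$ is closer in spirit to how the paper treats other quaternionic facts (Proposition~\ref{Complex} and the standard eigenvalues), and is essentially the route taken in the cited literature, but the direct argument you prefer is complete and arguably cleaner.
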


The singular values from the SVD are then used to define the quaternionic Schatten-2 norm for quaternionic matrices. Specifically,   the quaternionic Schatten-2 norm of quaternionic matrix $A = (a_{ij}) \in \H_{n \times m} $ is defined by
\begin{equation}
\| A \|_{\H} = \sqrt{\sum_{i,j} |a_{ij}|^2} = \sqrt{\sum_{k=1}^{r} \sigma_k^2(A)},
\end{equation}
where \( \sigma_k(A) \) are the nonzero singular values of \( A \).

With this norm in place, we can now define the distance between two points \( P, Q \in \Gr_{n,k}(\H) \). The distance is given by
\begin{equation}
d(P, Q) = \| [X, P] \|_{\H} = \| X \|_{\H},
\end{equation}
where \( X \) is the matrix associated with the geodesic connecting \( P \) and \( Q \).

This leads us to the following theorem, which provides an explicit expression for the shortest distance between any two points in \( \Gr_{n,k}(\H) \).

 Let $X\in U_{\H}(n)$, and let $\hat{\lambda}(X)=(\hat{\lambda}_1(X),...,\hat{\lambda}_n(X))$ be the set of standard eigenvalues of $X$.
For any complex number $c$ on the unit circle, we can express $c =e^{i\alpha}$ with $\alpha\in [-\pi,\pi]$. Denote $\overline{\arg}(c)=\alpha$ as the argument of $c$. In particular, for $c=-1$, we can take either $\pi$ or $-\pi$ as its argument, that is, $\overline{\arg}(-1)=\pi$ or   $-\pi$.

\begin{theorem}
Let $P$ and $Q$ be two points in $\Gr_{n,k }(\H)$. Then 
the shortest distance between $P$ and $Q$ is 
\begin{equation}\label{dF}
\hat{d}(P,Q)=\frac{1}{2}\sqrt{\sum_j \overline{\arg}^2 \big(\hat{\lambda}_j \big((I-2Q)(I-2P) \big) \big)}.
\end{equation}
 
\end{theorem}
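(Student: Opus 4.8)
The plan is to reduce the quaternionic computation to a complex one via the representation $\chi$, compute the relevant eigenvalues there, and then carefully track how the Frobenius norm $\|X\|_{\H}$ relates to the standard eigenvalues of the quaternionic matrix $e^{2X}=(I-2Q)(I-2P)$. First I would recall from the preceding discussion that the geodesic from $P$ to $Q$ is $\gamma(t)=\exp(tX)P\exp(-tX)$ with $\exp(2X)=(I-2Q)(I-2P)$, and that the distance is $d(P,Q)=\|X\|_{\H}$. Since $I-2P$ and $I-2Q$ are each quaternionic unitary (indeed Hermitian involutions), their product $R:=(I-2Q)(I-2P)$ lies in $U_{\H}(n)$, so $2X\in\s_{\H}(n)$ is a (matrix) logarithm of a quaternionic unitary matrix. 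The key point is that among all such logarithms, the shortest-geodesic one is obtained by choosing for each eigenvalue the principal branch of the argument in $[-\pi,\pi]$; this is exactly the content of the $\overline{\arg}$ convention set up just before the statement.

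Next I would pass to the complex representation. By Proposition~\ref{Complex}, $\chi_R=\chi_{(I-2Q)(I-2P)}=\chi_{I-2Q}\chi_{I-2P}$ is a $2n\times 2n$ unitary matrix, and $\chi$ intertwines the matrix exponential, so $\chi_{2X}$ is a skew-Hermitian logarithm of $\chi_R$. Because $2X\in\s_{\H}(n)$ is skew-Hermitian, $\chi_{2X}$ is diagonalizable with purely imaginary eigenvalues; by the Corollary on standard eigenvalues, the $2n$ eigenvalues of $\chi_R$ come in conjugate pairs, and the $n$ standard eigenvalues $\hat\lambda_i(R)$ are precisely those lying in the closed upper half-plane. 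Writing $\hat\lambda_i(R)=e^{i\theta_i}$ with $\theta_i=\overline{\arg}(\hat\lambda_i(R))$, the eigenvalues of $\chi_{2X}$ (for the minimal logarithm) are $\{i\theta_i\}$ together with $\{-i\theta_i\}$, i.e. each $\theta_i$ occurs once from the upper-half-plane eigenvalue and its conjugate $-\theta_i$ from the paired lower-half-plane eigenvalue.

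Then I would compute the norm. The singular values of the quaternionic matrix $X$ equal the absolute values of its eigenvalues (here $X$ is skew-Hermitian up to the factor $2$, hence normal), and one has the general fact $\|A\|_{\H}^2=\frac{1}{2}\|\chi_A\|_F^2$, which follows from the block structure of $\chi_A$ and the SVD theorem quoted above (the two diagonal blocks of $\chi_A$ in an appropriate basis carry the same singular values). Applying this to $A=X$: $\|X\|_{\H}^2=\frac12\|\chi_X\|_F^2=\frac12\cdot\frac14\|\chi_{2X}\|_F^2=\frac18\sum(\text{eigenvalues of }\chi_{2X})^2\text{ in modulus squared}=\frac18\sum_i\big(\theta_i^2+(-\theta_i)^2\big)=\frac14\sum_i\theta_i^2$. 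Taking square roots gives $d(P,Q)=\|X\|_{\H}=\frac12\sqrt{\sum_i\overline{\arg}^2(\hat\lambda_i(R))}$, which is \eqref{dF}.

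The main obstacle I anticipate is not the algebra but the \emph{minimality/branch-selection} argument: one must verify that the logarithm $X$ with eigenvalue-arguments confined to $[-\pi,\pi]$ genuinely yields the shortest geodesic, i.e. that replacing any $\theta_i$ by $\theta_i+2\pi m$ only increases $\|X\|_{\H}$, and that the resulting $X$ indeed lies in the appropriate $\Ad(U_P)\p_*$-type subspace so that $\exp(tX)P\exp(-tX)$ is a genuine geodesic of $\Gr_{n,k}(\H)$ rather than merely a curve in the ambient group orbit. Handling the ambiguity at the eigenvalue $-1$ (where $\overline{\arg}=\pm\pi$) requires noting that both choices give the same squared contribution $\pi^2$, so the formula is well-defined; this should be remarked explicitly. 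Everything else — compatibility of $\chi$ with products, exponentials, and adjoints, and the relation between quaternionic and complex Frobenius norms — is routine given the results already assembled in Section~\ref{sec2}.
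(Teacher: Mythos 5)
Your proposal is correct and follows essentially the same route as the paper: both reduce the problem to the relation $\exp(2X)=(I-2Q)(I-2P)$, take the minimum of $\|X\|_{\H}$ over all logarithms by selecting the principal arguments $\overline{\arg}(\hat\lambda_i)\in[-\pi,\pi]$ of the standard eigenvalues, and read off the distance as $\tfrac12\sqrt{\sum_i\overline{\arg}^2(\hat\lambda_i)}$. Your detour through the complex representation $\chi$ (via $\|A\|_{\H}^2=\tfrac12\|\chi_A\|_F^2$) only makes explicit the norm--eigenvalue identity that the paper asserts directly for normal quaternionic matrices, and the branch-minimality and horizontality issues you flag are likewise left implicit in the paper's own proof.
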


\begin{proof}
Suppose that there are $t$ geodesics between $P$ and $Q$. The geodesic connecting $P$ and $Q$ with the tangent vector $[X_j, P]$ is given by:
\[
\gamma(t) = \exp(tX_j) P \exp(-tX_j), \quad \text{for } j = 1, 2, \dots, t.
\]
The distance between $P$ and $Q$ along the geodesic with the tangent vector $[X_j, P]$ at the point $P$ is:
\[
d_i(P, Q) = \|[X_j, P]\|_{\H} = \|X_j\|_{\H}.
\]
Thus, the shortest distance between $P$ and $Q$ is:
\[
\hat{d}(P, Q) = \min_j d_j(P, Q) = \min_j \|X_j\|_{\H}.
\]

For each $X_j$, we have the following relation:
\begin{equation}
\exp(2X_j) = (I - 2Q)(I - 2P).
\end{equation}
Since $(I - 2Q)(I - 2P)$ is a quaternionic unitary matrix, the eigenvalues $\hat{\lambda}_j\big( (I - 2Q)(I - 2P) \big)$ are complex numbers on the unit circle, and the remaining right eigenvalues are similar to the standard eigenvalues.
Let us write the eigenvalues as:
\[
\hat{\lambda}_j \big( (I - 2Q)(I - 2P) \big) = \exp(i a_j),
\]
where $a_j \in [-\pi, \pi]$. Since similar eigenvalues have the same norm, we obtain:

\[
\min_j \|2X_j\|_{\H} = \sqrt{\sum_j a_j^2}.
\]

Therefore, we have
\[
\hat{d}(P, Q) = \frac{1}{2} \min_j \|2X_j\|_{\H} = \frac{1}{2} \sqrt{\sum_j \overline{\arg}^2 \big( \hat{\lambda}_j \big( (I - 2Q)(I - 2P) \big) \big)}.
\]

\end{proof}

\begin{remark}
The distance formula \eqref{dF}, which is also compatible with real and complex Grassmannians,    is a well-defined metric as it satisfies the three fundamental conditions of a metric space: ``non-negativity'', ``symmetry'', and the ``triangle inequality''. A detailed proof is provided below.
 
\end{remark}
\subsection{Proof of the Triangle Inequality for the Distance Formula \eqref{dF}}
It is straightforward to verify that the non-negativity and symmetry conditions hold for the distance formula. To complete the proof that it is a well-defined metric, we need to establish the triangle inequality. To do this, we first introduce some key properties of the complex Grassmannian $\Gr_{n,k}(\C)$, which will help guide our approach.
We consider the model of Grassmannians as the set of orthogonal projection matrices in $\Cnn$ of $\rank k$: 
$$
\Gr_{n,k}(\C)=\{P\in \Cnn: P^2 =P, \, P^*=P,\ \rank P = k\},
$$
which is the subset of the space of $n\times n$ Hermitian matrices.

The properties of \( \Gr_{n,k}(\C) \) have been well studied, particularly in \cite{BT2024} and \cite{BK2015}, and we will use some of their results in our proof.

First, the tangent space of any point $P\in \Gr_{n,k}(\C)$ is given by

\[T_P\Gr_{n,k}(\C) =\{[X,P]=(I-2P)X=-X(I-2P): X\in \u(n)\},
 \]
 
where $\u(n)$ is the space of skew-Hermitian matrices.
Next, from
Proposition 3.2 in \cite{BT2020} and \cite{BK2015} we know that  for a point \( P \in \Gr_{n,k}(\mathbb{C}) \) with a tangent vector  \( \overrightarrow{PQ} = [X, P] \), the geodesic \( \gamma(t) \) is given by
\[
\gamma(t) = \exp(tX) P \exp(-tX),
\]
where \( Q = \exp(X) P \exp(-X) \) and \( \exp \) denotes the matrix exponential function.
Moreover, 
for every $P\in \Gr_{n,k} (\C)$, there is a  $U_P\in U(n)$ such that $P=U_P P_0 U_P^{-1}$ by the Spectral Decomposition of Hermitian matrices, where
\[P_0:=\begin{pmatrix}I_k&0\\ 0& 0 \end{pmatrix} \in \Gr_{n,k}(\C).\]
With this spectral decomposition, we can further refine the description of the tangent space.
\begin{proposition} For every $P\in \Gr_{n,k} (\C)$, let  $U_P\in U(n)$ such that $P=U_P P_0 U_P^{-1}$. Then
\begin{eqnarray*}
T_P \Gr_{n,k}(\C) 
&=&\left\{  \left [U_P \begin{pmatrix} 0 & Y_1\\
-Y^*_1&0\end{pmatrix}U^{-1}_P, P\right]  : 
 Y_1\in \C_{k\times (n-k)}\right\}\\
 &=&[\Ad(U_P)\p_*, P],
\end{eqnarray*}

where 
$$\p_*: = \left\{\begin{pmatrix} 0 & Y_1\\
-Y^*_1&0\end{pmatrix}: Y_1^*\in \C_{(n-k)\times k}\right\}\subset \u(n).
$$
\end{proposition}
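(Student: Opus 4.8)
The plan is to derive the Proposition directly from the description $T_P\Gr_{n,k}(\C) = \{[X,P] : X \in \u(n)\}$ recorded just above, by showing that letting $X$ range only over $\Ad(U_P)\p_*$ already produces every tangent vector. The engine is the kernel of the linear surjection $\varphi : \u(n) \to T_P\Gr_{n,k}(\C)$, $X \mapsto [X,P]$, whose elements are exactly the $X$ commuting with $P$.

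First I would conjugate everything to the base point $P_0$. Writing a general $X \in \u(n)$ as $X = U_P \hat{X} U_P^{-1}$ with $\hat{X} \in \u(n)$ (legitimate since $\Ad(U_P)$ is a linear automorphism of $\u(n)$, conjugation by a unitary preserving skew-Hermiticity), one computes $[X,P] = U_P[\hat{X}, P_0]U_P^{-1}$, so $X \in \ker\varphi$ iff $\hat{X}$ commutes with $P_0 = \diag(I_k, 0)$. A skew-Hermitian matrix commuting with $P_0$ is forced to be block diagonal, i.e.\ lies in $\k := \left\{\begin{pmatrix} A & 0 \\ 0 & D\end{pmatrix} : A \in \u(k),\, D \in \u(n-k)\right\}$; hence $\ker\varphi = \Ad(U_P)\k$.

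Next I would use the vector-space splitting $\u(n) = \k \oplus \p_*$ — every skew-Hermitian matrix is uniquely the sum of its block-diagonal and its block-off-diagonal part — and apply the automorphism $\Ad(U_P)$ to get $\u(n) = \Ad(U_P)\k \oplus \Ad(U_P)\p_*$. Feeding this into $\varphi$ and using $\varphi(\Ad(U_P)\k) = 0$ yields
\[T_P\Gr_{n,k}(\C) = \varphi(\u(n)) = \varphi\big(\Ad(U_P)\p_*\big) = \big[\Ad(U_P)\p_*, P\big],\]
which is the second displayed equality; the first is merely the definition of $\p_*$ written out, $\Ad(U_P)\p_* = \left\{U_P\begin{pmatrix} 0 & Y_1 \\ -Y_1^* & 0\end{pmatrix}U_P^{-1} : Y_1 \in \C_{k\times(n-k)}\right\}$.

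As a consistency check (and an alternative to the kernel argument), the restricted map $\p_* \to T_P\Gr_{n,k}(\C)$, $\hat{X} \mapsto [\Ad(U_P)\hat{X}, P]$, is injective because its kernel is $\p_* \cap \k = \{0\}$, and both spaces have real dimension $2k(n-k)$, so it is in fact an isomorphism, giving surjectivity for free. The only substantive point in the whole argument is the commutant computation — that a skew-Hermitian matrix commuting with $P_0$ must be block diagonal — and even that is elementary; everything else is bookkeeping with the orthogonal decomposition of $\u(n)$. I therefore anticipate no real obstacle: this is simply the complex specialization of the reasoning already sketched for $\Gr_{n,k}(\H)$ earlier in this section, now written out in full.
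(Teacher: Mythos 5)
Your proposal is correct and takes essentially the same route as the paper: both conjugate by $U_P$ to reduce to the base point $P_0$ and exploit the block form of a skew-Hermitian matrix, the key fact being that the block-diagonal part commutes with $P$ and so contributes nothing to $[X,P]$, while the off-diagonal block $Y_1$ produces the tangent vector. The paper simply computes $[X,P]$ explicitly in block form and observes that only $Y_1$ survives, whereas you package the same computation as the identification $\ker(\ad P)=\Ad(U_P)\k$ together with the splitting $\u(n)=\k\oplus\p_*$ (plus an optional dimension check) --- a reorganization rather than a genuinely different argument.
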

\begin{proof}
Suppose $P=U_P P_0 U_P^{-1}$. Given $X \in \u(n)$, express it as
$$
X = U_P \begin{pmatrix} X_1 & Y_1\\
-Y^*_1&Z_1\end{pmatrix}U^{-1}_P,\quad  X_1\in \u(n),\ Z_1 \in \u(n-k),\ Y_1\in \C_{k\times (n-k)}
$$
so 
\begin{equation} \label{[X,P]}
[X,P] = U_P \begin{pmatrix} 0 & -Y_1\\
-Y^*_1&0\end{pmatrix}U^{-1}_P = \left [ U_P \begin{pmatrix} 0& Y_1\\
-Y^*_1&0\end{pmatrix}U^{-1}_P,P \right]
\end{equation}
and  
\begin{equation} \label{XP+PX}
XP+PX = U_P \begin{pmatrix} 2X_1 & Y_1\\
-Y^*_1&0\end{pmatrix}U^{-1}_P.
\end{equation}
 Then
$$
T_P \Gr_{n,k}(\C) 
=\left\{  \left [U_P \begin{pmatrix} 0 & Y_1\\
-Y^*_1&0\end{pmatrix}U^{-1}_P, P\right]  : 
 Y_1\in \C_{k\times (n-k)}\right\}.
$$
So we can restrict the choices of $X\in \u(n)$ to a smaller set
$
\Ad(U_P)\p_*
\subset \u(n),
$
where 
$$\p_*: = \left\{\begin{pmatrix} 0 & Y_1\\
-Y^*_1&0\end{pmatrix}: Y_1^*\in \C_{(n-k)\times k}\right\}\subset \u(n).
$$
Thus
\[
T_P \Gr_{n,k}(\C) = \{{[X, P]}: X\in \Ad(U_P)\p_*\}.
\]
\end{proof}

\begin{proposition}\label{tangent vector} Let $P, Q\in \Gr_{n,k}(\C)$. Let $\gamma(t)$ be the geodesic joining $\gamma(0)=P$ and $\gamma(1) = Q$. Then there exists a  $\tilde{X}_{PQ}\in \p_*$ such that $[\Ad(U^{-1}_P)\tilde{X}_{PQ}, P]$ is the tangent vector to $\gamma$ at $P$, and there is $U_Q\in U(n)$ such that $U_Q P_o U_Q^{-1}=Q$ and 
\begin{equation}\label{X_PQ}
e^{\tilde{X}_{PQ}} = U_P^{-1}U_Q.
\end{equation}


\end{proposition}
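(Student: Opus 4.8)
The plan is to read off $\tilde X_{PQ}$ directly from the initial velocity of the geodesic $\gamma$, using the refined description of $T_P\Gr_{n,k}(\C)$ established in the previous proposition, and then to write down $U_Q$ explicitly in terms of $\tilde X_{PQ}$ and check the two claimed identities.

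First I would observe that $\dot\gamma(0)$ is a tangent vector at $P$, so by the previous proposition there is a (unique) $\tilde X_{PQ}\in\p_*$ with
\[
\dot\gamma(0)=[\Ad(U_P)\tilde X_{PQ},P]=[U_P\tilde X_{PQ}U_P^{-1},P].
\]
Set $X:=\Ad(U_P)\tilde X_{PQ}=U_P\tilde X_{PQ}U_P^{-1}\in\Ad(U_P)\p_*$. Since $X$ lies in $\Ad(U_P)\p_*$, the curve $t\mapsto\exp(tX)P\exp(-tX)$ is the geodesic of $\Gr_{n,k}(\C)$ issuing from $P$ with initial velocity $[X,P]=\dot\gamma(0)$, by the geodesic formula recalled above from \cite{BT2024,BK2015}. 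By uniqueness of the geodesic with prescribed initial point and velocity this curve coincides with $\gamma$, so evaluating at $t=1$ gives
\[
Q=\gamma(1)=\exp(X)P\exp(-X).
\]
Next I would set $U_Q:=U_P\exp(\tilde X_{PQ})$. Since $\tilde X_{PQ}\in\p_*\subset\u(n)$ is skew-Hermitian, $\exp(\tilde X_{PQ})\in\U(n)$, hence $U_Q\in\U(n)$; and using $\exp(AZA^{-1})=A\exp(Z)A^{-1}$ together with $P=U_PP_0U_P^{-1}$ and the previous display, one gets
\[
U_QP_0U_Q^{-1}=U_P\exp(\tilde X_{PQ})U_P^{-1}\bigl(U_PP_0U_P^{-1}\bigr)U_P\exp(-\tilde X_{PQ})U_P^{-1}=\exp(X)P\exp(-X)=Q,
\]
while $U_P^{-1}U_Q=\exp(\tilde X_{PQ})$ is precisely \eqref{X_PQ}.

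The main obstacle is the first step, namely picking the \emph{right} representative $X$ of the tangent vector $\dot\gamma(0)$: for a generic skew-Hermitian $X$ with $[X,P]=\dot\gamma(0)$ the curve $\exp(tX)P\exp(-tX)$ need not be a geodesic, so one cannot conclude $Q=\exp(X)P\exp(-X)$, and the associated matrix $U_P\exp(U_P^{-1}XU_P)$ need not satisfy the coset identity. It is exactly the refinement $T_P\Gr_{n,k}(\C)=[\Ad(U_P)\p_*,P]$ from the previous proposition that forces $X\in\Ad(U_P)\p_*$ (equivalently $\tilde X_{PQ}\in\p_*$), which is what makes both the geodesic formula and the conjugation computation go through. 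The remaining verifications — that the exponential of a skew-Hermitian matrix is unitary, the conjugation identity for $\exp$, and the uniqueness of geodesics — are standard.
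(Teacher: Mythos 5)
Your proposal is correct and follows essentially the same route as the paper: use the $\p_*$-refinement of $T_P\Gr_{n,k}(\C)$ to pick $\tilde X_{PQ}$, identify $\gamma(t)$ with $\exp(tX)P\exp(-tX)$ for $X=\Ad(U_P)\tilde X_{PQ}$ (the paper phrases this via the lifted curve $\hat\gamma(t)=U_Pe^{t\tilde X_{PQ}}$ and the projection $\pi$), and set $U_Q=U_P e^{\tilde X_{PQ}}$ so that $e^{\tilde X_{PQ}}=U_P^{-1}U_Q$. Your writing $\Ad(U_P)\tilde X_{PQ}$ rather than the statement's $\Ad(U_P^{-1})\tilde X_{PQ}$ is in fact the convention consistent with the tangent-space description and with the final coset identity, so no gap there.
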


\begin{proof}
 As $\ad P: \Ad(U_P) \p_*\to T_P \Gr_{n,k}(\C) $ is  surjective, there is a   $\tilde{X}_{PQ}\in  \p_*$ such that $\overrightarrow{PQ} = [\Ad(U^{-1}_P) \tilde{X}_{PQ}, P]$ is the tangent vector to the geodesic $\gamma(t)$ at $P$. 
 The map
\[
\pi: U(n) \to \Gr_{n,k} (\C), \quad U\to UP_0 U^{-1}.
\]
 is surjective and $\Gr_{n,k} (\C)=\pi(U(n))$ is  the orbit of $P_0$ under the adjoint action of 
$U(n)$. 
Then from the directional derivative of $\pi$ at $U_p$, we can find a geodesic $\hat{\gamma}(t):=U_Pe^{t \tilde{X}_{PQ}}$ in $U(n)$ with the tangent vector $U_P \tilde{X}_{PQ}$ at $U_P$
such that \[\pi(\hat{\gamma}(t))=\gamma(t)\]
and  
  $U_Q:=\hat{\gamma}(1)$.
When $t=1$, we have
$$
e^{\tilde{X}_{PQ}} = U_P^{-1}U_Q. 
$$
\end{proof}

To prove the triangle inequality, we use the following result from Thompson.
\begin{theorem}[Thompson, 1986]\label{Thompson} 
Let \( A, B \in \Cnn\) be skew-Hermitian matrices. Then there exist unitary matrices \( X, Y \in U(n) \) (depending on \( A \) and \( B \)) such that:
\begin{equation}\label{Thom}
e^{A} e^{B} = e^{X A X^{-1} + Y B Y^{-1}}.
\end{equation}
\end{theorem}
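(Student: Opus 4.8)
The plan is to reduce \eqref{Thom} to a purely spectral assertion and then to appeal to the comparison between the \emph{additive} and \emph{multiplicative} eigenvalue problems for skew-Hermitian and unitary matrices, which is the real content of Thompson's theorem. Note at the outset that $e^A e^B$ is unitary, hence normal, and that $XAX^{-1}+YBY^{-1}$ is skew-Hermitian for every $X,Y\in\U(n)$.

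First I would show that it suffices to produce $X,Y\in\U(n)$ for which $C_0:=XAX^{-1}+YBY^{-1}$ has eigenvalues $i\theta_1,\dots,i\theta_n$, where $e^{i\theta_1},\dots,e^{i\theta_n}$ are the eigenvalues of $e^A e^B$ (each $\theta_j$ being a chosen branch of the argument). Indeed, such a $C_0$ makes $e^{C_0}$ unitary with the same spectrum as $e^A e^B$, so $e^A e^B=We^{C_0}W^{-1}$ for some $W\in\U(n)$, and then $e^A e^B=e^{WC_0W^{-1}}=e^{(WX)A(WX)^{-1}+(WY)B(WY)^{-1}}$, so $W$ is absorbed into the conjugating unitaries and \eqref{Thom} follows. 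Writing $A=iA_0$, $B=iB_0$ with $A_0,B_0$ Hermitian and letting $\alpha,\beta$ be their sorted eigenvalue vectors, the eigenvalues of $XA_0X^{-1}+YB_0Y^{-1}$ sweep out, as $X,Y$ vary over $\U(n)$, a compact set $\Delta(\alpha,\beta)\subset\R^n$ of sorted vectors lying in the hyperplane $\{\gamma:\sum_j\gamma_j=\sum_j\alpha_j+\sum_j\beta_j\}$ (the Horn region). On the other hand $\det(e^{iA_0}e^{iB_0})=e^{i(\tr A_0+\tr B_0)}$, so the arguments $\theta_j$ of the eigenvalues of $e^A e^B$ satisfy $\sum_j\theta_j\equiv\tr A_0+\tr B_0\pmod{2\pi}$. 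Hence the task reduces to choosing the branches so that $(\theta_1,\dots,\theta_n)$, suitably ordered, belongs to $\Delta(\alpha,\beta)$, and then selecting $X,Y$ that realize this eigenvalue vector.

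The hard part is exactly this matching: once the $2\pi$-ambiguities are spent correctly, the ``multiplicative'' spectral data produced by the product $e^{iA_0}e^{iB_0}$ must land inside the ``additive'' region $\Delta(\alpha,\beta)$, and these two kinds of data obey formally parallel but genuinely distinct families of inequalities, so the inclusion is not automatic. The route I would take, in the spirit of Thompson's proof, is a continuity argument along $t\mapsto e^{tA}e^{tB}$, $t\in[0,1]$: for small $t$ the logarithm is unambiguous and the inclusion into $\Delta(t\alpha,t\beta)$ can be checked by a Lidskii-type perturbation estimate; as $t$ increases one tracks the eigenvalue arguments of $e^{tA}e^{tB}$, re-choosing the branch (and correspondingly the conjugating unitaries, using that $i\pi$ and $-i\pi$ exponentiate to the same value) whenever an eigenvalue crosses $-1$, and verifying that realizability in the Horn region survives each such crossing. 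Since this comparison of additive and multiplicative eigenvalue inequalities is precisely Thompson's theorem, we take \eqref{Thom} as known in what follows; the discussion above is meant only to indicate why it holds and where its difficulty is concentrated.
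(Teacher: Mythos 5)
The paper never proves this statement: it is quoted as Thompson's 1986 theorem, used as a black box in the proof of Theorem \ref{Triangular1}, with only the added remark that \eqref{Thom} persists for quaternionic matrices via the complex representation. Your proposal, read as a proof, is circular -- your closing sentence concedes that the essential step (that the eigenvalue arguments of the unitary matrix $e^{A}e^{B}$ can be matched, after spending the $2\pi$-ambiguities, by the spectrum of some $XAX^{-1}+YBY^{-1}$) ``is precisely Thompson's theorem'' -- but since the paper itself simply cites Thompson, deferring the core content to the literature is consistent with how the result is treated there. What you add beyond the paper is correct and worthwhile: the reduction showing it suffices to realize the spectrum, because two unitary matrices with the same eigenvalues (with multiplicity) are unitarily similar and the conjugating $W$ is absorbed via $We^{C_0}W^{-1}=e^{WC_0W^{-1}}=e^{(WX)A(WX)^{-1}+(WY)B(WY)^{-1}}$, together with the determinant compatibility $\sum_j\theta_j\equiv\tr A_0+\tr B_0 \pmod{2\pi}$. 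Two cautions, however. First, the continuity/branch-crossing sketch along $t\mapsto e^{tA}e^{tB}$ is purely heuristic: tracking eigenvalues through $-1$ while verifying that realizability in the additive region survives each crossing is exactly where the whole difficulty is concentrated, so nothing in your outline reduces the actual work. Second, describing the additive region by Horn-type inequalities (Klyachko, Knutson--Tao) postdates 1986, and the multiplicative-versus-additive comparison you invoke is closer to later work of Agnihotri--Woodward and Belkale than to Thompson's original argument, so the sketch should not be presented as being ``in the spirit of Thompson's proof.'' For the purposes of this paper the honest course is the one both you and the authors take: cite Thompson for \eqref{Thom}, and cite the complex-representation argument for the quaternionic version actually needed in Corollary \ref{Triangular}.
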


We would like to highlight that the relation in \eqref{Thom} also holds for quaternionic matrices in \( \H \), and this can be proved using the properties of the complex matrix representation of quaternions, as discussed in \cite{RL14}.
With all these tools in place, we can now state and prove the main result in the complex Grassmannians first.

\begin{theorem}\label{Triangular1}
Let $P, Q, R\in \Gr_{n,k}(\C)$. Then 
 \begin{equation}
\hat{d}(P, Q)\leq \hat{d}(Q, R)+\hat{d}(R, P).\end{equation}
\end{theorem}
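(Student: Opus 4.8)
The plan is to reduce the triangle inequality on $\Gr_{n,k}(\C)$ to Thompson's theorem (Theorem~\ref{Thompson}) together with the distance formula \eqref{dF}. The key observation is that, by the geodesic description above, the three pairwise distances are governed by the matrices $X_{QR}, X_{RP}, X_{PQ}$ satisfying $e^{2X_{QR}} = (I-2R)(I-2Q)$, $e^{2X_{RP}} = (I-2P)(I-2R)$, and $e^{2X_{PQ}} = (I-2Q)(I-2P)$, with $\hat d(Q,R) = \tfrac12\|X_{QR}\|_F$ and similarly for the others (here we may pick the minimal-norm logarithms, whose eigenvalues have arguments in $[-\pi,\pi]$). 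Since $I-2P$, $I-2Q$, $I-2R$ are each unitary involutions, multiplying the first two relations gives
\[
e^{2X_{RP}}\,e^{2X_{QR}} = (I-2P)(I-2R)(I-2R)(I-2Q) = (I-2P)(I-2Q) = e^{2X_{PQ}}.
\]

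First I would record that $X_{QR}$ and $X_{RP}$ are skew-Hermitian (they are logarithms of unitary matrices in the relevant eigenvalue range, and $(I-2Q)(I-2R)$ has spectrum on the unit circle by the corollary/remark above), so Thompson's theorem applies: there exist $S, T\in \U(n)$ with
\[
e^{2X_{RP}}\,e^{2X_{QR}} = e^{\,S(2X_{RP})S^{-1} + T(2X_{QR})T^{-1}}.
\]
Combining with the displayed identity, $e^{2X_{PQ}} = e^{\,2SX_{RP}S^{-1} + 2TX_{QR}T^{-1}}$. Now the norm $\hat d(P,Q)$ is computed from the arguments of the standard eigenvalues of $e^{2X_{PQ}}$; the point is that $\tfrac12\|\,\cdot\,\|_F$ applied to the \emph{minimal} skew-Hermitian logarithm is bounded above by $\tfrac12\|\,\cdot\,\|_F$ applied to \emph{any} skew-Hermitian logarithm of the same unitary matrix (because the minimal logarithm has eigenvalue-arguments of smallest absolute value among all logarithms, matched up appropriately). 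Hence
\[
2\,\hat d(P,Q) \;\le\; \| 2S X_{RP} S^{-1} + 2T X_{QR} T^{-1} \|_F \;\le\; \|2X_{RP}\|_F + \|2X_{QR}\|_F \;=\; 4\,\hat d(R,P) + 4\,\hat d(Q,R),
\]
wait — I would instead keep the factor of $\tfrac12$ throughout: writing $A = 2X_{RP}$, $B = 2X_{QR}$, we get $2\hat d(P,Q)\le \|SAS^{-1}+TBT^{-1}\|_F \le \|A\|_F+\|B\|_F = 2\hat d(R,P)+2\hat d(Q,R)$ by unitary invariance of the Frobenius norm and the triangle inequality for $\|\cdot\|_F$, which yields exactly $\hat d(P,Q)\le \hat d(Q,R)+\hat d(R,P)$.

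The main obstacle is the subtle point in the middle: after Thompson's theorem we have $e^{2X_{PQ}}$ expressed as $e^{C}$ with $C = SAS^{-1}+TBT^{-1}$ skew-Hermitian, but $C$ need not be the \emph{minimal} logarithm whose norm computes $\hat d(P,Q)$ via \eqref{dF}. I would handle this by proving the auxiliary fact that for a unitary $V$, $\hat d$-type quantity $\tfrac12\sqrt{\sum_i \overline{\arg}^2(\lambda_i(V))}$ equals $\min\{\tfrac12\|C\|_F : e^{C}=V,\ C^*=-C\}$ — i.e. the principal logarithm (arguments in $[-\pi,\pi]$) minimizes the Frobenius norm among all skew-Hermitian logarithms. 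This follows because any skew-Hermitian logarithm $C$ of $V$ is unitarily diagonalizable with purely imaginary eigenvalues $i\theta_j$ where $e^{i\theta_j}$ ranges over the eigenvalues of $V$, so $\|C\|_F^2 = \sum_j \theta_j^2$, and each $\theta_j$ satisfies $|\theta_j|\ge |\overline{\arg}(\lambda_j)|$ for the matching eigenvalue. Once this lemma is in hand, the chain of inequalities above closes the proof; I would also remark that the same argument works verbatim over $\H$ using the complex representation $\chi$, since $\chi$ is an isometry (up to a constant factor) compatible with products, adjoints, and exponentials by Proposition~\ref{Complex}, which is presumably the bridge used to extend Theorem~\ref{Triangular1} to $\Gr_{n,k}(\H)$.
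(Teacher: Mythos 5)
Your proposal is correct in substance, but it reaches the key identity by a genuinely different (and more economical) route than the paper. The paper lifts $P,Q,R$ to the unitary group via Proposition \ref{tangent vector}, writes $e^{\tilde{X}_{PQ}}e^{\tilde{X}_{QR}}e^{\tilde{X}_{RP}}=W$ with $W$ block diagonal, and extracts the relation $e^{-2\tilde{X}_{PQ}}=e^{\tilde{X}_{QR}}e^{\tilde{X}_{RP}}e^{\tilde{X}_{RP}}e^{\tilde{X}_{QR}}$ through a $\sinh/\cosh$ block computation before invoking Thompson's theorem; you instead get the analogous product relation in one line from the involutions $(I-2P)^2=(I-2Q)^2=(I-2R)^2=I$ applied to $e^{2X}=(I-2\cdot)(I-2\cdot)$, which avoids the lifting machinery entirely. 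After Thompson's theorem both arguments must handle the same subtlety, that the resulting skew-Hermitian exponent need not be the one computing $\hat d(P,Q)$: the paper does this with the mod-$2\pi$ case analysis on eigenvalues and weak majorization, while you isolate it as a clean lemma that the principal skew-Hermitian logarithm minimizes the Frobenius norm among all skew-Hermitian logarithms of a unitary matrix, which is essentially the same estimate packaged more transparently and makes the role of formula \eqref{dF} explicit. Your final chain $2\hat d(P,Q)\le\|SAS^{-1}+TBT^{-1}\|_F\le\|A\|_F+\|B\|_F$ is exactly the paper's concluding inequality. Two minor blemishes: your displayed identity should read $(I-2P)(I-2Q)=e^{-2X_{PQ}}$ under the stated convention $e^{2X_{PQ}}=(I-2Q)(I-2P)$ (harmless, since $X_{PQ}$ and $-X_{PQ}$ have the same Frobenius norm and the squared arguments of the spectra of a unitary matrix and its inverse coincide), and the mid-proof self-correction about the factor $\tfrac12$ should be cleaned up, but neither affects validity. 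Your closing remark that the quaternionic case follows through the complex representation $\chi$ matches the paper's Corollary \ref{Triangular}.
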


\begin{proof}
By Proposition \ref{tangent vector},we can find $U_P, \hat{U}_P,$  $U_Q$ and $U_R$ such that
$$
e^{\tilde{X}_{PQ}} = U_P^{-1}U_Q, \quad  e^{\tilde{X}_{QR}}= U_Q^{-1} U_R, \quad e^{\tilde{X}_{RP}} = U_R^{-1} \hat{U}_P,
$$
where $U_P P_0 U^{-1}_P=\hat{U}_P P_0 \hat{U}^{-1}_P=P$, $U_Q P_0 U^{-1}_Q=Q$, and  $U_R P_0 U^{-1}_R=R$. 
Assume that $\hat{U}_P= U_P W$ and  
$$W:=\begin{bmatrix} W_k &\\ & W_{n-k} \end{bmatrix}\quad W_k\in U(k), W_{n-k}\in U(n-k).$$ 
Then
\begin{equation}\label{PQR}
e^{\tilde{X}_{PQ}} e^{\tilde{X}_{QR}}e^{\tilde{X}_{RP}}=W,
\end{equation}
where $\tilde{X}_{PQ}, \tilde{X}_{QR}, \tilde{X}_{PR}\in \p_*$.
Moreover, for any $X\in \C_{n\times n}$, $e^X=\sinh X+\cosh X$, $\sinh X=\frac{e^{X}-e^{-X}}{2}$ and $\cosh X=\frac{e^{X}+e^{-X}}{2}$. From \eqref{PQR}, we have
\[\begin{split}
&\sinh (\tilde{X}_{PQ}) \sinh (\tilde{X}_{QR}) \sinh (\tilde{X}_{RP})+
 \sinh (\tilde{X}_{PQ}) \cosh (\tilde{X}_{QR}) \cosh (\tilde{X}_{RP})\\
&+\cosh (\tilde{X}_{PQ}) \sinh (\tilde{X}_{QR}) \cosh (\tilde{X}_{RP})+
\cosh (\tilde{X}_{PQ}) \cosh (\tilde{X}_{QR}) \sinh (\tilde{X}_{RP})=0,
\end{split}\]
and 
\begin{equation}\label{XPQR}
e^{\tilde{X}_{PQ}} e^{\tilde{X}_{QR}}e^{\tilde{X}_{RP}}=e^{-\tilde{X}_{PQ}} e^{-\tilde{X}_{QR}}e^{-\tilde{X}_{RP}}.
\end{equation}

By \eqref{XPQR}, we have
\[e^{-2\tilde{X}_{PQ}}=e^{\tilde{X}_{QR}} e^{\tilde{X}_{RP}} e^{\tilde{X}_{RP}}e^{\tilde{X}_{QR}}.\]
Apply Theorem \ref{Thompson} to get two unitary matrices $M_1$ and $M_2$ such that
 \[e^{-2\tilde{X}_{PQ}}=e^{2M},
 \]
 
where $M=M_1\tilde{X}_{QR} M^*_1+M_2 \tilde{X}_{RP}M^*_2$.
When we only consider the case that all eigenvalues of $\tilde{X}_{PQ}$, $\tilde{X}_{QR}$, and $\tilde{X}_{RP}$ are all in 
$[-\pi, \pi]$, the corresonding distance will be the shortest distances.
The relation between eigenvalues of $\tilde{X}_{PQ}$ and $M$ is as following (make sure the eigenvalue of $2\tilde{X}_{PQ}$ is in $[-2\pi, 2\pi]$)
\[
\lambda_i(\tilde{X}_{PQ})=\begin{cases}
\lambda_i(M) +2\pi&   \mbox{if} \quad \lambda_i(M)< -\pi\\
\lambda_i(M)&  \mbox{if} \quad -\pi< \lambda_i(M)< \pi\\
\lambda_i(M) -2\pi&   \mbox{if} \quad \pi<\lambda_i(M)  \\
\end{cases}.
\]
Obviously, $\sigma(\tilde{X}_{PQ}) \prec_w \sigma(M)$.
So \[\|\tilde{X}_{PQ}\|_F \leq \|M\|_F=\|M_1\tilde{X}_{QR} M^*_1+M_2 \tilde{X}_{RP}M^*_2 \|_F\leq \|\tilde{X}_{QR}\|_F+\|\tilde{X}_{RP}\|_F,
\]
that is, 
\[\hat{d}(P, Q)\leq \hat{d}(Q, R)+\hat{d}(R, P).\]
 
\end{proof}

 
Finally, by Proposition \ref{Complex}, we can extend this result naturally to quaternionic Grassmannians:

\begin{corollary}\label{Triangular}
Let $P, Q, R\in \Gr_{n,k }(\H)$. Then 
 
\begin{equation}\hat{d}(P, Q)\leq \hat{d}(Q, R)+\hat{d}(R, P).\end{equation}
\end{corollary}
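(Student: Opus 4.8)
The plan is to reduce the quaternionic triangle inequality to the complex one proved in Theorem~\ref{Triangular1}, using the complex representation $H \mapsto \chi_H$ and the fact (Proposition~\ref{Complex}) that $\chi$ is a unital $*$-homomorphism that preserves unitarity, Hermiticity, rank (up to a factor of two), and inverses, and — crucially — intertwines the exponential map. First I would observe that for $P \in \Gr_{n,k}(\H)$ we have $\chi_P \in \Gr_{2n,2k}(\C)$: indeed $\chi_P^2 = \chi_{P^2} = \chi_P$, $\chi_P^* = \chi_{P^*} = \chi_P$, and $\rank \chi_P = 2\rank P = 2k$. So $\chi$ carries the three points $P,Q,R$ of $\Gr_{n,k}(\H)$ to three points $\chi_P,\chi_Q,\chi_R$ of $\Gr_{2n,2k}(\C)$, to which Theorem~\ref{Triangular1} applies.

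The heart of the argument is to show that $\chi$ is distance-preserving, i.e. $\hat d(\chi_P,\chi_Q) = \hat d(P,Q)$, so that the complex triangle inequality $\hat d(\chi_P,\chi_Q) \le \hat d(\chi_Q,\chi_R) + \hat d(\chi_R,\chi_P)$ transfers verbatim. By the distance formula \eqref{dF}, this amounts to comparing the standard eigenvalues of $(I_{2n}-2\chi_Q)(I_{2n}-2\chi_P)$ with those of $(I_n-2Q)(I_n-2P)$. Here I would use $I_{2n} - 2\chi_P = \chi_{I_n - 2P}$ (linearity of $\chi$) and multiplicativity to get $(I_{2n}-2\chi_Q)(I_{2n}-2\chi_P) = \chi_{(I_n-2Q)(I_n-2P)}$. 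Then by the Corollary on standard eigenvalues in Section~\ref{sec2}, the $2n$ eigenvalues of $\chi_M$ (for $M := (I_n-2Q)(I_n-2P) \in U_{\H}(n)$) are exactly the $n$ standard eigenvalues of $M$ together with their $n$ complex conjugates. Since $\overline{\arg}(\bar c)^2 = \overline{\arg}(c)^2$, the sum $\sum_i \overline{\arg}^2(\hat\lambda_i(\chi_M))$ over all $2n$ eigenvalues equals $2\sum_i \overline{\arg}^2(\hat\lambda_i(M))$ over the $n$ standard ones; but $\Gr_{2n,2k}(\C)$ uses the same formula with a factor $\tfrac12$ out front, so we must be careful about whether the right count of eigenvalues is used — I would track the normalization so that $\hat d(\chi_P,\chi_Q)^2 = \tfrac14 \cdot 2\sum_i \overline{\arg}^2(\hat\lambda_i(M)) $ versus $\hat d(P,Q)^2 = \tfrac14\sum_i\overline{\arg}^2(\hat\lambda_i(M))$. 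This shows $\hat d(\chi_P,\chi_Q) = \sqrt 2\, \hat d(P,Q)$, and since the factor $\sqrt 2$ is the same for all three pairs it cancels, yielding the claimed inequality.

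Assembling the pieces: apply $\chi$ to $P,Q,R$; land in $\Gr_{2n,2k}(\C)$; invoke Theorem~\ref{Triangular1} there; translate each of the three distances back via $\hat d(\chi_\bullet,\chi_\bullet) = \sqrt2\,\hat d(\bullet,\bullet)$; divide through by $\sqrt2$. Alternatively — and this is closer to what the phrase ``by Proposition~\ref{Complex}'' in the excerpt suggests — one can bypass the eigenvalue bookkeeping and instead rerun the proof of Theorem~\ref{Triangular1} directly in the quaternionic setting: the only external ingredient there is Thompson's theorem \eqref{Thom}, which the authors have already noted holds over $\H$ via the complex representation, and the weak-majorization step $\sigma(\tilde X_{PQ}) \prec_w \sigma(M)$ works identically once one knows the quaternionic SVD/singular values behave well under $\chi$.

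The main obstacle I anticipate is precisely the normalization/eigenvalue-multiplicity bookkeeping in the $\chi$-transfer: one must be sure that the distance formula \eqref{dF} applied in $\Gr_{2n,2k}(\C)$ sums over all $2n$ eigenvalues of $\chi_M$ (not the ``standard'' half), that these come in conjugate pairs by the Corollary in Section~\ref{sec2}, and that $\overline{\arg}$ of $-1$ (the ambiguous case) is handled consistently — e.g. by always choosing $+\pi$ — so that the squared-argument sums match up to the clean factor of $2$. Everything else is formal: $\chi$ respects all the algebraic operations appearing in \eqref{dF}, so no genuinely new geometry is needed beyond what Theorem~\ref{Triangular1} already provides.
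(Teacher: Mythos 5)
Your proposal is correct and follows essentially the same route as the paper, which justifies the corollary in a single line by invoking the complex representation of Proposition~\ref{Complex} (together with the earlier remark that Thompson's identity \eqref{Thom} holds over $\H$ via $\chi$). In fact you supply the bookkeeping the paper leaves implicit — the verification that $\chi_P\in\Gr_{2n,2k}(\C)$, that $(I_{2n}-2\chi_Q)(I_{2n}-2\chi_P)=\chi_{(I-2Q)(I-2P)}$, and that the conjugate-pair eigenvalue structure gives $\hat d(\chi_P,\chi_Q)=\sqrt{2}\,\hat d(P,Q)$ uniformly, so the factor cancels — which makes your write-up a valid and more complete version of the paper's argument.
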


This completes the proof that the distance formula \eqref{dF} is a valid metric on \( \Gr_{n,k }(\H) \), satisfying all necessary conditions.

\begin{remark}
The results established in our previous work 
\cite{TW2024} can be extended to the quaternionic Grassmannian within certain locally convex ball. This extension follows from the properties of the complex representation of quaternionic matrices.
\end{remark}

\section{ Color Image Set Recognition based on Quaternionic Grassmannian}\label{sec4}

\subsection{Grassmannian Representation of a Color Image Set}
Each \( n \times m \) color digital image is represented as an \( n \times m \) quaternionic matrix, where each element in the matrix is a pure quaternion of the form \( r i + g j + b k \), where \( r, g, b \) are the red, green, and blue channel values of the corresponding pixel.

Consider
\[
Z=\begin{bmatrix} r_{11} i + g_{11} j + b_{11} k & \cdots & r_{1m} i + g_{1m} j + b_{1m} k \\ \vdots & \ddots & \vdots \\ r_{n1} i + g_{n1} j + b_{n1} k & \cdots & r_{nm} i + g_{nm} j + b_{nm} k \end{bmatrix} \in \H^{n \times m}.  
\]
The quaternionic matrix  $Z$ encodes the color image compactly by storing the RGB channels together.
The image matrix is partitioned into columns \( z_i \) and stacked into a long column vector of length \( nm \):

\[
[z_1| z_2|\cdots |z_m] \to  \z =\begin{bmatrix} z_1\\ z_2\\ \vdots\\ z_m \end{bmatrix}\in \H_{nm\times 1}.
\]

This column vector \( \z  \) provides a compact representation of the color image as a single quaternionic vector.

Then, consider a set \( \A = \{A_1, A_2, \dots, A_p\} \) of \( p \) color images with the same 
size $t\times m$. Each image can be represented by a quaternionic vector:
\[
 \fa_i \in \H_{tm \times 1}, \quad i = 1, 2, \dots, p.
\]
The image set can thus be written as a set of \( p \)  quaternionic vectors:
\[
\{ \fa_1, \fa_2, \dots, \fa_p \}.
\]
 
To reduce the dimensionality, we apply Quaternion Principal Component Analysis (QPCA) \cite{le2003} to transform the original $q$ quaternionic vectors into a lower-dimensional space. By retaining the top k components, we obtain a reduced set of representative vectors:
\[
\{ \hat{\fa}_1, \hat{\fa}_2, \dots, \hat{\fa}_k \}.
\]
These vectors span a subspace in the quaternionic vector space. To process the image set, we orthonormalize these vectors using the modified Gram-Schmidt process of Theorem 4.3 in \cite{F2003}  to obtain an orthonormal set:
\[
\{ \a_1, \a_2, \dots, \a_k \}.
\]
 
The resulting orthonormal vectors form the columns of a matrix \( X \in \H_{tm \times k} \):
\[
X = [  \a_1 | \a_2 | \dots | \a_k ].
\]

Finally, the quaternionic Grassmannian element representing the color image set is given by:
\[
A = XX^* \in \Gr_{tm,k}(\H)
\]
where  \( A \) is a projection matrix in the quaternionic Grassmannian \( \Gr_{tm,k}(\H) \), and it compactly encodes the color image set \( \A \).

Algorithm 1 outlines the procedure for representing a set of color images using quaternionic Grassmannians. Each image is transformed into a quaternionic column vector, and an orthonormal set is derived for efficient representation. The framework is illustrated in the figure \ref{fig1}.

\begin{algorithm}\label{Alg1}
    \caption{Quaternionic Grassmannian Representation of a Color Image Set}
    \begin{algorithmic}[1]
        \State \textbf{Input:} A color image set \( \A = \{A_1, A_2, \dots, A_p\} \) of \( p \)  color images with the same size $t\times m$ 
               \State Convert each color image into a quaternionic column vector:
        \For{each image \( A_i \)}
            \State Partition \( A_i \) into columns and stack them into a quaternionic vector \( \fa_i \in \mathbb{H}_{tm} \).
        \EndFor
        \State 
        Reduce \( \{ \fa_1, \fa_2, \dots, \fa_p \}\)     
 as  \( \{\hat{\fa}_1, \hat{\fa}_2, \dots, \hat{\fa}_k \}\)  by QSVD.
        \State 
        Orthonormalize the quaternionic vectors \( \{\hat{\fa}_1, \hat{\fa}_2, \dots, \hat{\fa}_k \} \) using the modified Gram-Schmidt process  \cite{F2003}  for quaternions to obtain an orthonormal set $ \{ \a_1, \a_2, \dots, \a_k \}$. 
        \State Form the matrix \( X \) with the orthonormal vectors as columns:
        \[
        X = [   \a_1 | \a_2 | \dots | \a_k ]
        \]
        \State Compute the Grassmannian representation \( A \) of the image set:
        \[
       A = XX^*
        \]
        \State \textbf{Output:} The Grassmannian element \( A \), representing the color image set $\A$.
    \end{algorithmic}
\end{algorithm}

\begin{figure}[h] 
   \centering
   \includegraphics[width=6.3in]{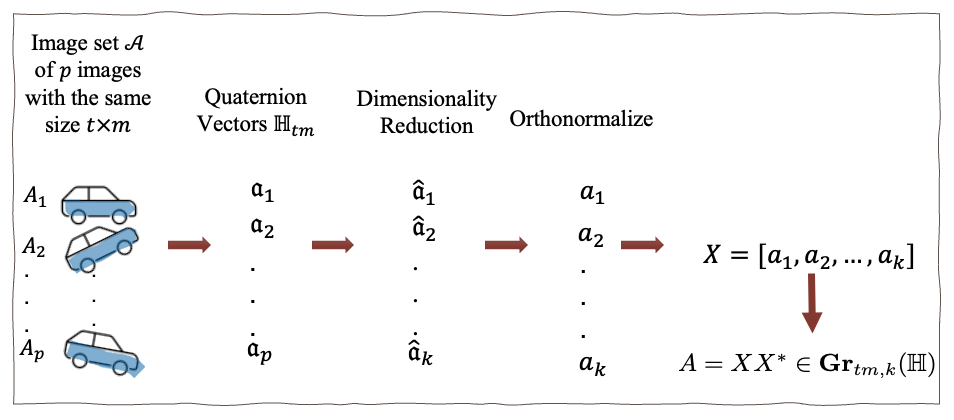} 
   \caption{Quaternionic Grassmannian representation of a color image set.}
   \label{fig1}
\end{figure}

\subsection{A New Framework for Color Image Set Recognition Based on Quaternionic Grassmannians}

Based on the Grassmannian representation of image sets and the distance formula, we propose a new framework for color image set recognition using quaternionic Grassmannians. As a straightforward example, we consider three image sets belonging to two distinct classes. The recognition process for these three image sets is described in Algorithm 2, and the framework is visually illustrated in Figure \ref{fig2}.

\begin{algorithm}
    \caption{Framework for Color Image Set Recognition Using Quaternionic Grassmannians (Three Image Sets)}
    \begin{algorithmic}[1]
        \State \textbf{Input:} Three color image sets \( \mathcal{A}, \mathcal{B}, \mathcal{C} \), each containing \( p \) color images of the same size \( n \times m \).
        \State Represent the three color image sets in \( \Gr_{n, k}(\H) \) (the Grassmannian space of quaternionic matrices) using Algorithm 1, obtaining representations \( A, B, C \).
        \State Compute the distances \( \hat{d}(A, B), \hat{d}(A, C), \) and \( \hat{d}(B, C) \).
        \State \textbf{Output:} The shortest distance identifies one class, and the remaining set belongs to the second class.
    \end{algorithmic}
\end{algorithm}

\begin{figure}[h]
   \centering
   \includegraphics[width=6in]{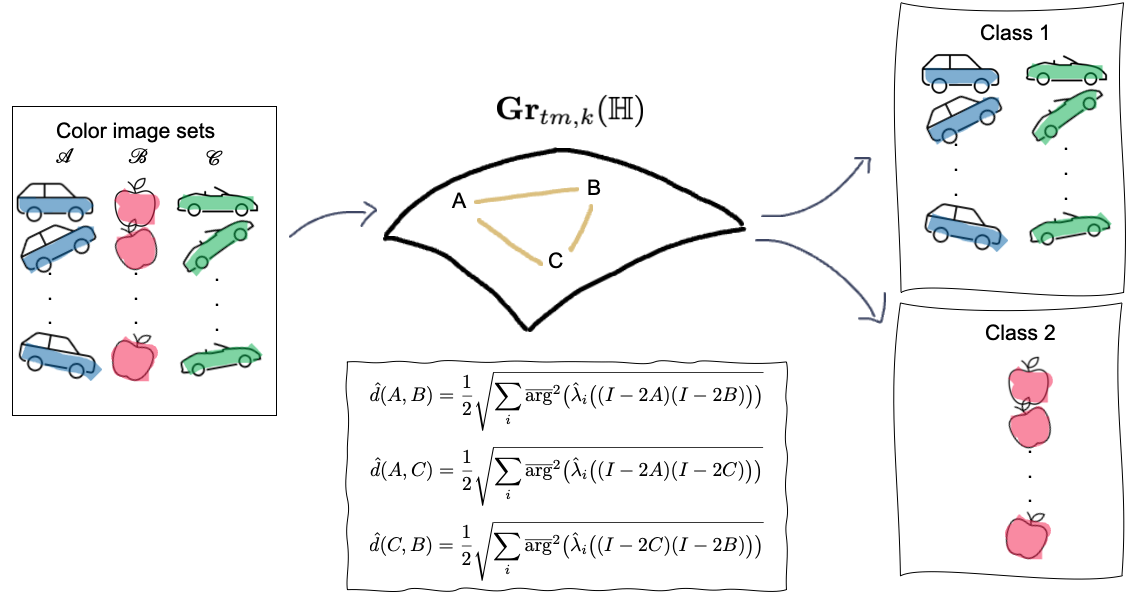}
   \caption{Framework for color image set recognition using quaternionic Grassmannians (three image sets)}
   \label{fig2}
\end{figure}

This framework establishes a fundamental structure for performing color image set recognition based on quaternionic Grassmannians. It provides a theoretical basis for developing more advanced algorithms and methods in future work, enabling more robust and efficient recognition techniques.

\section{Numerical Example}\label{sec5}

In this section, we evaluate the performance of our proposed quaternionic Grassmannian framework on two image set recognition tasks: the ETH-80 dataset \cite{LS2003} and a Highway Traffic dataset \cite{chan2005}. The MATLAB implementation used for all experiments in this paper is available at:  
\url{https://github.com/XiangXiangJY/QuaternionGrassmannian}.

\subsection{ETH-80 Dataset Evaluation}\label{sec5-eth80}

We first test our new framework using the ETH-80 dataset \cite{LS2003}. The ETH-80 dataset contains images from eight categories, including apples, pears, and cars. Each category has 10 objects with 41 views per object, resulting in a total of 3280 images. The following figure illustrates the ETH-80 dataset.

\begin{figure}[h]
   \centering
   \includegraphics[width=5in]{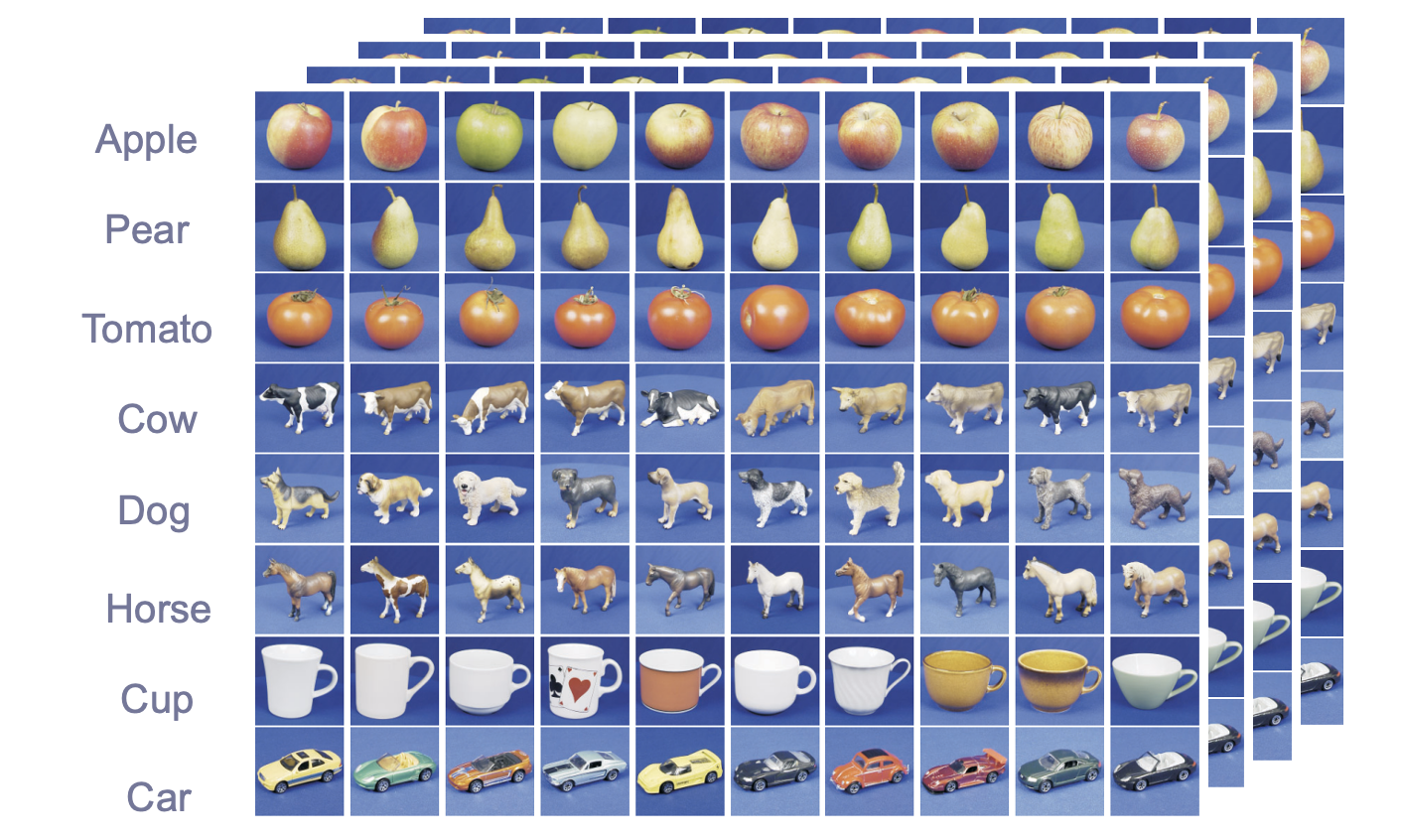}
   \caption{The eight categories  of the ETH-80 dataset. Each category contains 10 objects with 41 views per object.}
   \label{fig3}
\end{figure}

For this evaluation, the dataset was divided into training and testing sets. Five object instances selected from each category were used for training, and the remaining five were used for testing, ensuring a balanced and fair evaluation. Each image was resized to $20 \times 20$, with $t = 20$, $m = 20$, and we set $k = 9$. Consequently, the quaternionic Grassmannian considered in this work is $\Gr_{400,9}(\H )$.

In Figure \ref{fig4}, we apply Multidimensional Scaling (MDS) \cite{borg2005modern} to visualize one random split of the training dataset along with a selected 10  testing points from our trials. The eight categories are represented as clusters formed by the training data points, while the red point indicates the testing sample. To classify the test sample, we computed its average distance to each training cluster within the quaternionic Grassmannian space. The test sample was then assigned to the category corresponding to the nearest cluster.
\begin{figure}[h]
   \centering
   \includegraphics[width=5.5in]{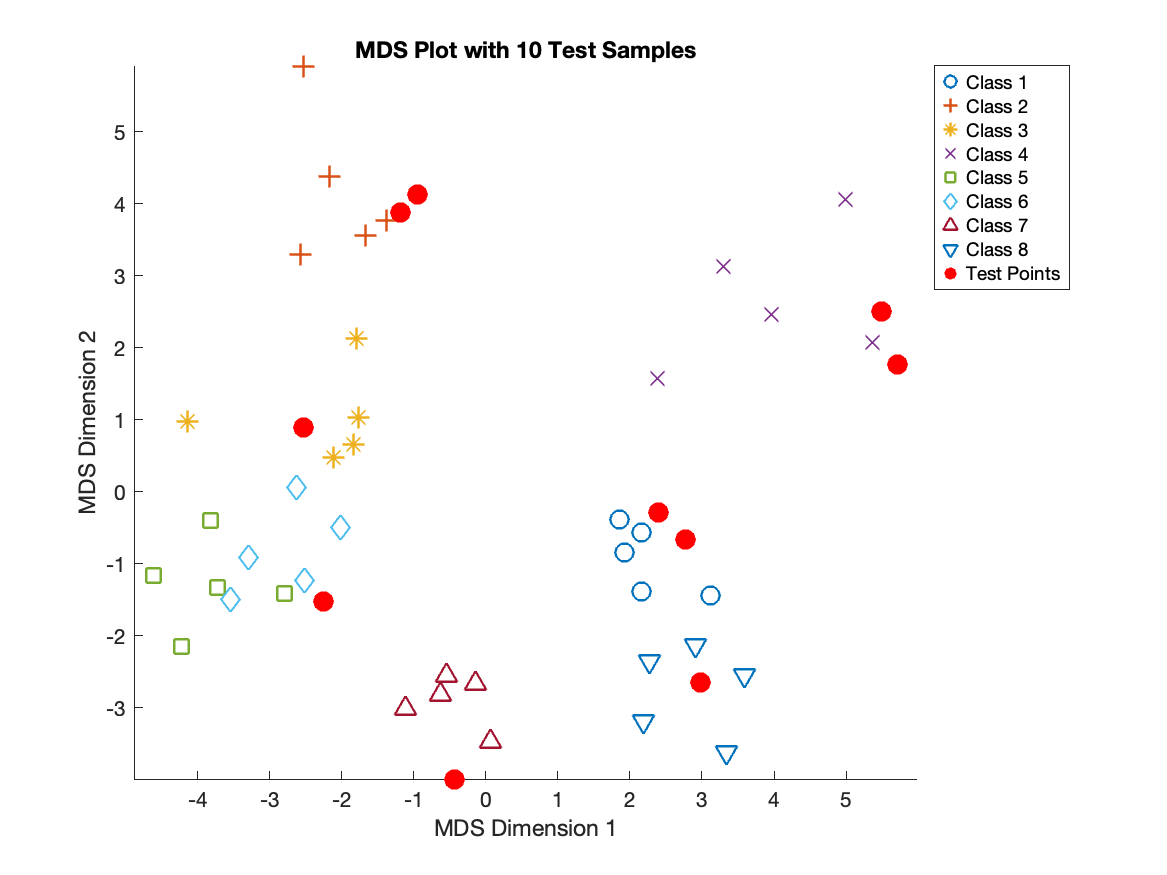}
   \caption{Visualization of the eight categories represented by training data points and the selected 10 testing data points.}
   \label{fig4}
\end{figure}

Using this framework, we computed the distances between points in the quaternionic Grassmannian to classify the test data. The process was repeated 10 times with different random training/testing splits in each repetition to ensure the reliability of the results, and the average recognition rates and standard deviations were calculated.  
Table \ref{tab:results} presents  the average recognition rates and standard deviations (\%) for various methods compared with our proposed approach.

\begin{table}[htbp]
   \centering
   \begin{tabular}{cc}
      \toprule
      \textbf{Method}   & \textbf{Recognition Rate (\%)} \\ 
      \midrule
     GDA \cite{hamm2008}   & 91.00 ± 2.13 \\ 
      GEDA \cite{Har2011}   & 92.50 ± 1.16 \\ 
      GDL \cite{Har2013}   & 93.50 ± 0.92 \\ 
      GiFME \cite{C2015}  & 74.50 ± 1.22 \\ 
      GGPLCR \cite{WEI2020} & 96.75 ± 1.30 \\ 
      \textbf{Our Method}  & \textbf{97.00 ± 1.97} \\ 
      \bottomrule
   \end{tabular}
   \caption{Average recognition rates and standard deviations (\%) on the ETH-80 dataset for various methods compared with our proposed framework.}
   \label{tab:results}
\end{table}

Our method achieved an average recognition rate of 97.00\% with a standard deviation of $\pm1.97\%$. While this demonstrates the framework’s ability to achieve high accuracy, the relatively large standard deviation indicates some variability in performance.

\begin{remark}
During our experiments, we observed some variations in recognition rates and standard deviations across multiple runs. Specifically, we conducted several independent rounds of evaluation, where each round involved 10 random trials. The average results for these rounds were as follows:  93.0 $\pm$ 3.5\%, 94.5 $\pm$ 3.07\%, 95.25 $\pm$ 2.75 \%, 96.75 $\pm$ 2.06 \%and 97.25 $\pm$ 2.75 \%. These fluctuations indicate that while our method achieves high recognition accuracy, its stability could be improved. This suggests that enhancing the robustness of our approach is an important direction for future work.
\end{remark}
\subsection{Highway Traffic Dataset Evaluation}\label{sec5-traffic}

To further validate the effectiveness and generalizability of our proposed framework, we conducted experiments on a Highway Traffic dataset \cite{chan2005}. This dataset consists of 254 color video sequences captured from highway surveillance cameras. Each video is categorized into one of three traffic conditions: \textbf{Heavy} (44 videos), \textbf{Medium} (45 videos), and \textbf{Light} (165 videos). Every video represents a short clip of traffic flow and contains between 42 and 52 frames. The videos are fully labeled according to their traffic condition in Figure \ref{figh1}.

\begin{figure}[h]
   \centering
   \includegraphics[width=4.5in]{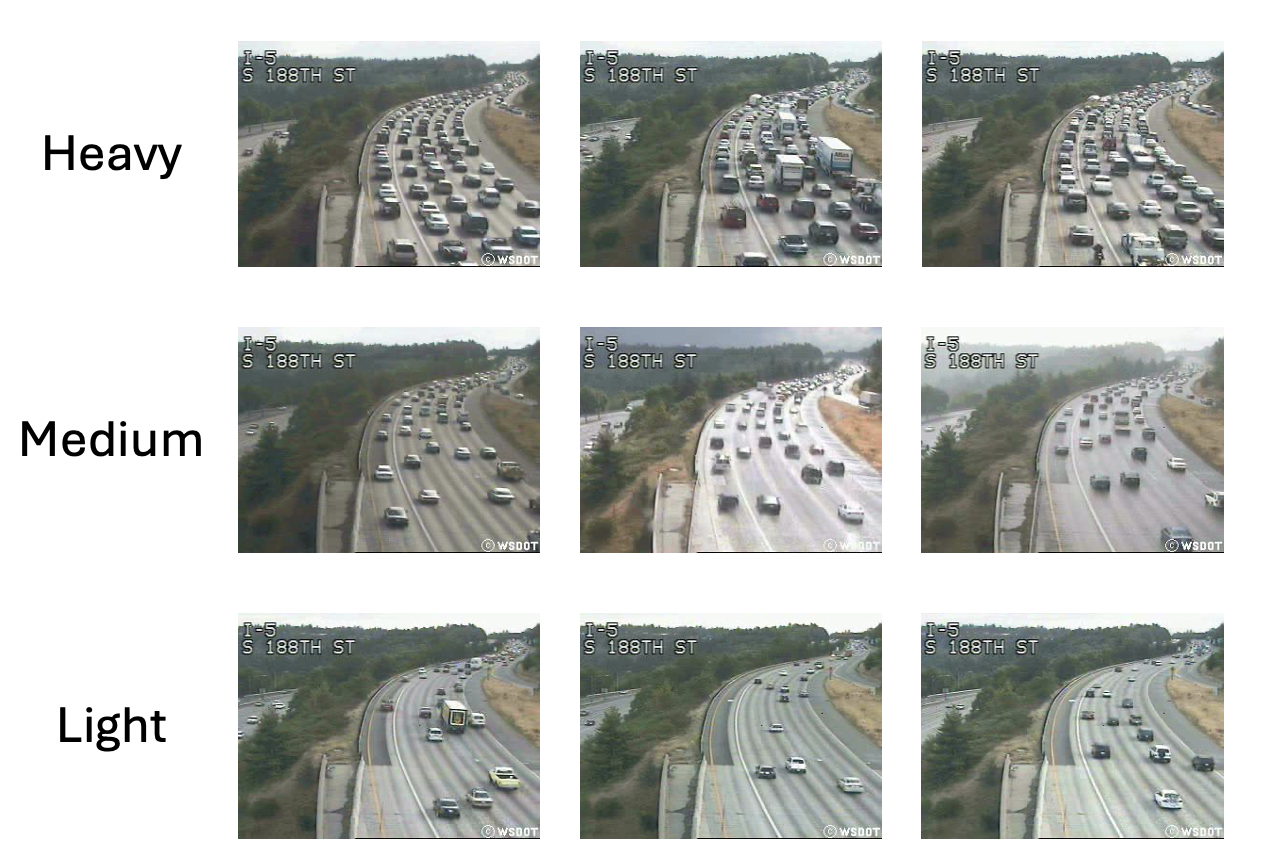}
   \caption{Sample frames from the Highway Traffic dataset. The dataset includes 44 Heavy, 45 Medium, and 165 Light traffic videos.}
   \label{figh1}
\end{figure}

For the experimental evaluation, we followed the protocol described in Wei et al.'s paper (2024) \cite{wei2024} to ensure a fair comparison. Each video was treated as an image set by extracting all its frames. Following the experimental setting in \cite{wei2024}, each frame was resized to $24 \times 24$, and we set $k = 9$. As a result, each image set corresponds to a point on the quaternionic Grassmannian $\Gr_{576,9}(\mathbb{H})$.

To assess classification performance, we randomly selected 192 video samples from the dataset for training, with the remaining samples used for testing. This random partitioning was repeated 10 times, each with a newly generated training/testing split. In each trial, the recognition rate was computed, and we report the final result as the average recognition rate and standard deviation over all 10 trials.
The MDS plot of the Highway Traffic dataset is shown in Figure \ref{figh7}, using the same methodology as in Figure  \ref{fig4}.

\begin{figure}[h]
   \centering
   \includegraphics[width=5.5in]{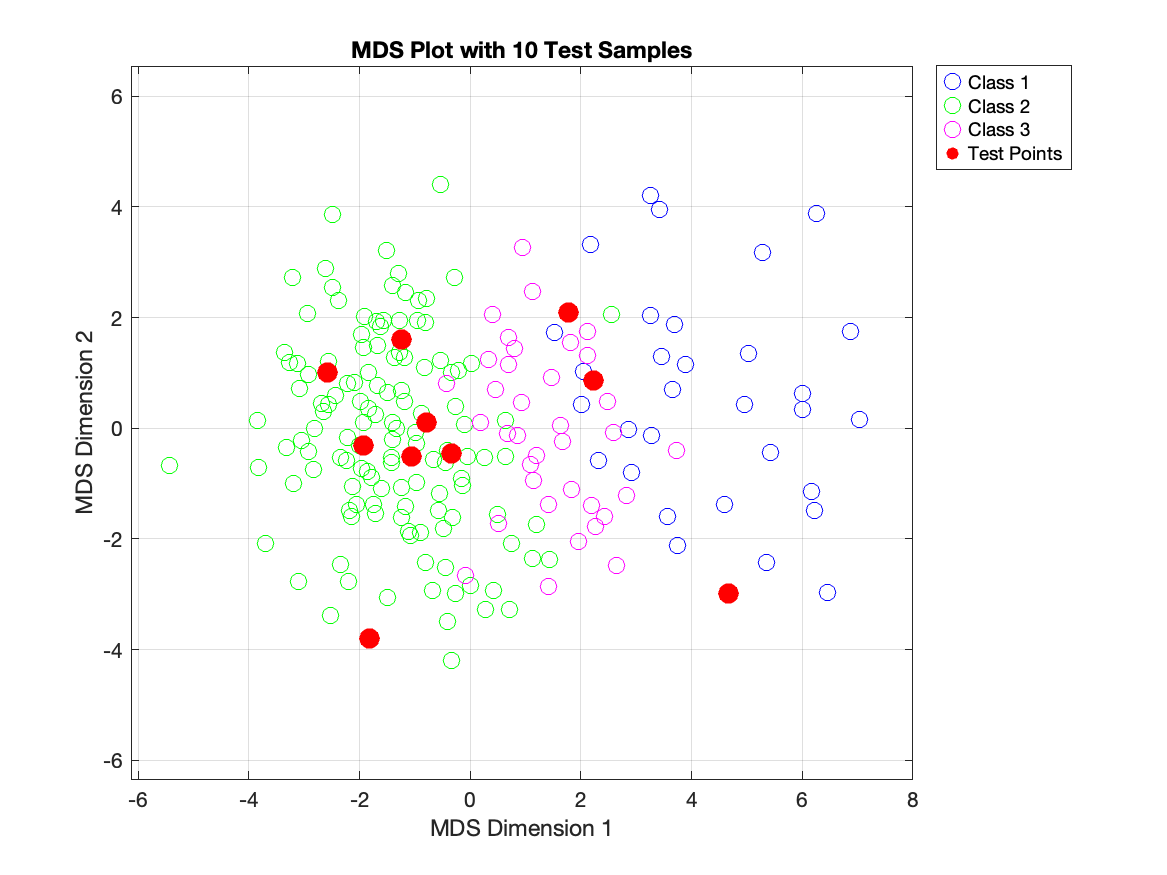}
   \caption{Visualization of the three categories represented by training data points and the selected 10 testing data points.}
   \label{figh7}
\end{figure}

Table \ref{tab:results_traffic} presents the recognition rates for various methods, including those reported in \cite{wei2024}, compared with our proposed approach.

\begin{table}[htbp]
   \centering
   \begin{tabular}{cc}
      \toprule
      \textbf{Method}   & \textbf{Recognition Rate (\%)} \\ 
      \midrule
        GNN  \cite{hamm2008} &70.00   \\ 
      GLPP \cite{kumar2019jumping} & 76.67 \\ 
       GDA \cite{hamm2008}   &75.69   \\ 
      GEDA \cite{Har2011}   & 78.96  \\ 
        GNPE\cite{wei2022neighborhood} & 75.66\\
        GALL \cite{wei2024}&  78.95\\
        F-GALL \cite{wei2024}& 79.04\\
       \textbf{Our Method}  & \textbf{88.55 ± 2.46} \\ 
      \bottomrule
   \end{tabular}
   \caption{Average recognition rates  (\%) on the Highway Traffic dataset for various methods compared with our proposed framework.}
   \label{tab:results_traffic}
\end{table}

Although the other methods do not report the standard deviations over their trials, it is evident that our proposed method achieves the highest average recognition rate. Specifically, our method achieved an average recognition accuracy of 88.55\% with a standard deviation of $\pm$ 2.46\%,   significantly outperforming all baseline approaches in terms of classification accuracy.

Despite this variability, we believe our framework shows great potential. Since advanced techniques such as deep learning or feature selection methods have not yet been incorporated, there is significant room for optimization and improvement. Future enhancements could further improve the framework’s performance and robustness.
\section{Conclusion}\label{sec6}

This work introduced significant contributions to the recognition of color image sets using quaternionic Grassmannians. We developed an explicit formula for computing the shortest path between points in quaternionic Grassmannians, offering a mathematically sound and efficient approach for distance calculations. Additionally, we proposed a novel framework for color image set recognition that effectively utilizes the structure of quaternionic Grassmannians to handle high-dimensional, multi-channel image data.

Despite these advancements, there is room for further improvement. 
\begin{itemize}
\item One limitation of our current method is that it relies on computing the standard eigenvalues of quaternionic unitary matrices of the form $(I - 2Q)(I - 2P)$. This step is time-consuming and slows down the process for large image sets. If we can develop faster ways to compute these eigenvalues, our method would work better for large-scale and real-time applications.
\item Another direction involves incorporating advanced techniques, such as deep learning and feature extraction, to enhance the robustness and accuracy of the framework. These additions could expand its applicability to more complex and diverse datasets.

\end{itemize}
By addressing these challenges, this framework has the potential to evolve into a powerful tool for a wide range of applications in computer vision and image processing, contributing to advancements in the field.


\end{document}